\newcommand\fs@spaceruled{\def\@fs@cfont{\bfseries}\let\@fs@capt\floatc@ruled
  \def\@fs@pre{\vspace{5\baselineskip}\hrule height.8pt depth0pt \kern2pt}%
  \def\@fs@post{\kern2pt\hrule\relax}%
  \def\@fs@mid{\kern2pt\hrule\kern2pt}%
  \let\@fs@iftopcapt\iftrue}
\newcommand{\smin}{s_{\text{min}}}
\begin{document}

\title{Labelling as an unsupervised learning problem}

\author{\name Terry Lyons$^\ast\phantom{}^\dag$ \email terry.lyons@maths.ox.ac.uk \\
       \name Imanol Perez Arribas$^\ast\phantom{}^\dag$ \email imanol.perez@maths.ox.ac.uk \\
       \addr Mathematical Institute, University of Oxford (UK)$^\ast$\\		       
       \addr Alan Turing Institute, London (UK)$\phantom{}^\dag$
       }

\editor{}

\maketitle

\begin{abstract}%   <- trailing '%' for backward compatibility of .sty file

Unravelling hidden patterns in datasets is a classical problem with many potential applications. In this paper, we present a challenge whose objective is to discover nonlinear relationships in noisy cloud of points. If a set of point satisfies a nonlinear relationship that is unlikely to be due to randomness, we will \textit{label} the set with this relationship. Since points can satisfy one, many or no such nonlinear relationships, cloud of points will typically have one, multiple or no labels at all. This introduces the \textit{labelling problem} that will be studied in this paper.

%In many applications, one has access to an unlabelled dataset from which one wishes to discover some hidden structure or patterns. A well-studied approach to achieve this is clustering. In this paper, we introduce a dual task, \textit{labelling}, whose objective is to come up with labels such that points that share a label are similar in some sense. When one labels a dataset, instances are allowed to have one, multiple or no labels. In contrast, clustering algorithms allow points to belong only to one cluster.

The objective of this paper is to develop a framework for the labelling problem. We introduce a precise notion of a label, and we propose an algorithm to discover such labels in a given dataset, which is then tested in synthetic datasets. We also analyse, using tools from random matrix theory, the problem of discovering false labels in the dataset.

\end{abstract}

\begin{keywords}
  Unsupervised learning, labelling, random matrix theory, pattern recognition, probability
\end{keywords}

%%%%%%%%%%%%%%%%%%%%%%%%%%%%%%
%%%%%%% BEGIN OF PAPER %%%%%%%
%%%%%%%%%%%%%%%%%%%%%%%%%%%%%%

\section{Introduction}

Identifying hidden patterns in a given dataset is a problem of great interest \citep{CLUSTERINGAPPLICATIONS, CLUSTERINGAPPLICATIONS2, CLUSTERINGAPPLICATIONS3}, and as a consequence it has been extensively studied \citep[Chapter 14]{UNSUPERVISED}. This is precisely the objective of unsupervised learning \citep{UNSUPERVISEDOBJECTIVE} -- one has access to a dataset, and one wishes to find underlying structure hidden in the data.

The objective of this paper is to introduce the \textit{labelling problem}, an unsupervised learning problem where one intends to assign to points in a datasets \textit{labels}, an object that will be defined in Section \ref{sec:labelling}.

\subsection{Labelling}

A cloud of points will share a label if unreasonably many of them share a common relationship. This relationship will not be determined by the individual points, but by the cloud of points as a whole.

For example, if sufficient points in a Euclidean space lie on a hyperplane, the hyperplane captures a relationship between the points that is unlikely to be due to randomness. Therefore, one could \textit{label} the cloud of points with that hyperplane. Consequently, points could have multiple labels since they do not have to lie on a single hyperplane.

This linear setting is a particular case of the noisy nonlinear framework we propose in Section \ref{sec:labelling} -- the \textit{labelling problem}. In this case, points will be labelled together if they satisfy a nonlinear relationship that is unlikely to be due to randomness -- see Definition \ref{def:label}.

This is a natural problem to consider, since many real-life problems consist of finding relationships that are not mutually exclusive. For instance, a group of people may share the property that they all like a particular movie -- the movie would be a label assigned to this group of people -- but each of them may also like other movies, which would relate them to other groups of people. That is, any given individual will have multiple labels: the movies he or she likes. Then, one could try to find users that share the same labels -- that is, users that like the same movies.

Consider Figures \ref{fig:two circles} and \ref{fig:arbitrary conics first}. It is visually apparent that there are at least two labels in Figure \ref{fig:two circles},  and three labels in Figure \ref{fig:arbitrary conics first}, and that some points have more than one label.

\begin{figure}[h]
\centering
\begin{minipage}[t]{.45\textwidth}
  \centering
   \includegraphics[width=\textwidth]{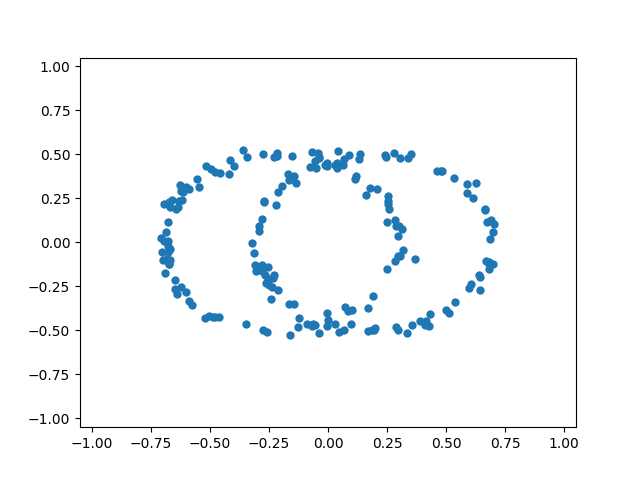}
    \caption{Two overlapping circles.}
	\label{fig:two circles}

\end{minipage}\hfill
\begin{minipage}[t]{.45\textwidth}
	\centering
   \includegraphics[width=\textwidth]{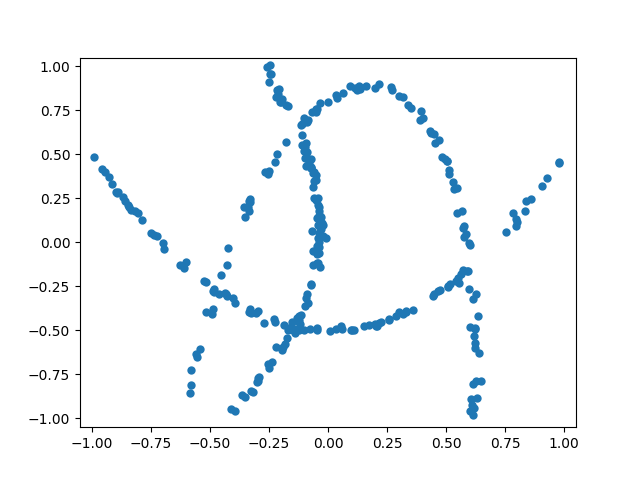}
    \caption{Three overlapping conics on the plane.}
	\label{fig:arbitrary conics first}
\end{minipage}
\end{figure}

\begin{figure}[h]
\begin{minipage}[t]{.45\textwidth}
  \centering
   \includegraphics[width=\textwidth]{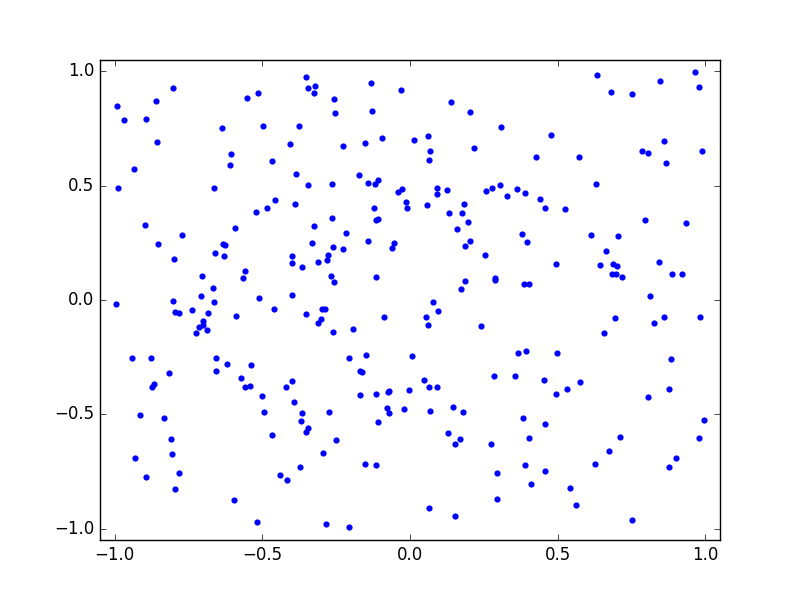}
    \caption{Figure \ref{fig:two circles} with low signal-to-noise ratio.}
	\label{fig:two circles noisy}

\end{minipage}\hfill
\begin{minipage}[t]{.45\textwidth}
	\centering
   \includegraphics[width=\textwidth]{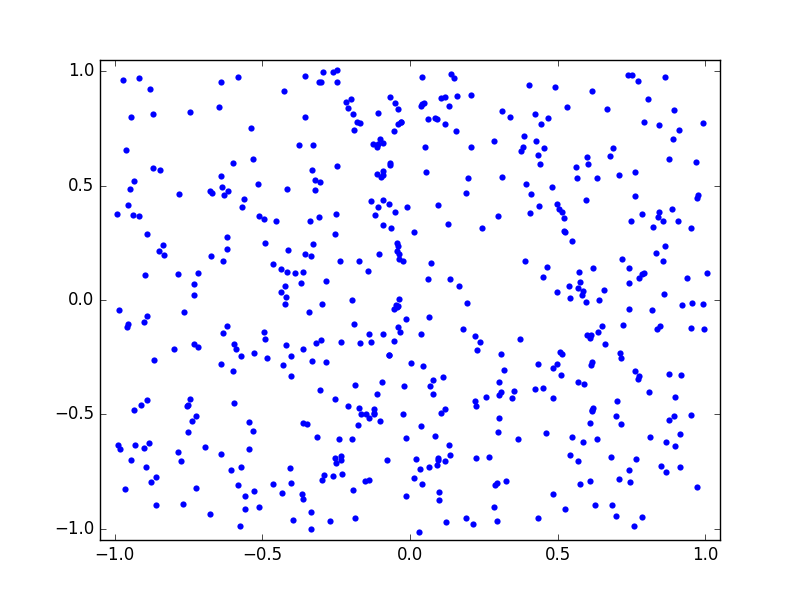}
    \caption{Figure \ref{fig:arbitrary conics first} with low signal-to-noise ratio.}
	\label{fig:arbitrary conics noisy}
	
\end{minipage}

\end{figure}

Figures \ref{fig:two circles noisy} and \ref{fig:arbitrary conics noisy} were obtained by adding background noise to Figures \ref{fig:two circles} and \ref{fig:arbitrary conics first} respectively, in order to obtain low signal-to-noise ratios. Now, it is challenging -- if not impossible -- to visually identify the patterns in the data, even though the patterns do still exist. Our goal is to explore the extent to which we can identify these labels in the presence of background noise.

\subsection{Clustering}

A traditional approach to extract such patterns from the available data is to use \textit{clustering} \citep{CLUSTERING}. In clustering, one tries to find some suitable partition of the dataset in such a way that points belonging to the same set of the partition -- known as cluster -- are similar, for some notion of \textit{similarity}.

%However, this is an alternative problem to the labelling problem we study in this paper.

Essentially, clustering algorithms try to find a function that maps points in a dataset to an index set for the clusters. In labelling, given a \textit{label}, we want to find points that possess this label. By definition a point can only belong to a single cluster, whereas points may have one label, multiple labels or no labels at all. The goals and techniques required for efficient labelling are different to those used in clustering.

There is some previous work on modifying clustering algorithms to allow data points to belong to multiple clusters. This form of clustering is known as \textit{fuzzy clustering}, where one assigns each data point to a cluster with a membership weight that indicates to which degree the point belongs to the cluster \citep{FUZZY}. For instance, the fuzzy c-means algorithm first proposed in \citep{FCM} is a popular fuzzy clustering algorithm. However, this approach is again separate to the \textit{labelling problem} we propose in this paper, since fuzzy clustering does not deal with situations where some data points genuinely belong to more than one group.

\subsection{Outline of paper}

In Section \ref{sec:labelling}, we introduce the notion of a \textit{label} as a property that might be possessed by points in a cloud. This will be the core object of this paper. After stating some basic properties of labels, in Section \ref{sec:algorithm} we propose and discuss an unsupervised algorithm to identify labels from a given dataset. Then, in Section \ref{sec:false discoveries}, we study what is the probability that a given label is a false discovery. Having some control over the probability of obtaining false discoveries will be important in practical applications. In Section \ref{subsec:overlapping} we apply our algorithm to a synthetic dataset where clustering algorithms will necessarily produce unsatisfactory results, whereas as we will see, our labelling algorithm obtains very visually intuitive labels. Finally, in Section \ref{subsec:pendulum} we showed how simple pendulums hidden in noisy data can be identified in an unsupervised way, following our algorithm.

\section{Labelling}\label{sec:labelling}

Let $\mathcal{X}$ be a topological space, called the \textit{observation space}. We will assume we have a feature map $\Phi:\mathcal{X}\rightarrow F$, with $F$ a topological vector space known as the \textit{feature space}. The triple $(\mathcal{X}, F, \Phi)$ will be called the \textit{augmented observation space}. The feature map generates a vector space of real-valued functions $\mathcal{F}_\Phi:=\{\ell\circ \Phi:\ell\in F'\}$, where $F'$ denotes the dual space of $F$. $\mathcal{F}_\Phi$ will be called the space of potential labels.

We will also assume we are given a probability measure $\mu$ on $\mathcal{X}$, which will be called \textit{background noise} and will act as a reference measure. Given an augmented observation space $(\mathcal{X}, F, \Phi)$ and a background noise $\mu$, we define the \textit{$\mu$-augmented observation space} as the tuple $(\mathcal{X}, F, \Phi, \mu)$.

\begin{example}[Polynomials in several variables]

In the case where the observation space is $\mathbb{R}^d$, we can define an augmented observation space by considering the feature map given by the mapping to a basis of polynomials of fixed degree.

\end{example}

Now, we will introduce a key object in this paper -- the notion of a \textit{label} for a given set of points.

\begin{definition}[Label]\label{def:label}
Let $(\mathcal{X}, F, \Phi, \mu)$ be a $\mu$-augmented observation space, let $\mathcal{C}\subset \mathcal{X}$, and set $0<\delta<1$. A potential label $f\in \mathcal{F}_\Phi$ is called a $(\mu, \delta)$-label for $\mathcal{C}$ if there exists an interval $I\subset \mathbb{R}$ such that $0\in I$, $f(\mathcal{C})\subset I$ and

\begin{equation}\label{eq:condition label}
(f_*(\mu))(I) < \delta,
\end{equation} where $f_*(\mu)$ is the pushforward measure defined as $(f_*(\mu))(I)=\mu(f^{-1}(I))$.
\end{definition}

Intuitively, a potential label for a cloud of points  $\mathcal{C}$ is a linear functional on the feature space $F$, $f\in \mathcal{F}_\Phi$. The potential label $f$ becomes a label if the cloud of points is \textit{unreasonably close} to its zero set.

\begin{definition}

Let $(\mathcal{X}, F, \Phi, \mu)$ be a $\mu$-augmented observation space, and let $\mathcal{C}\subset \mathcal{X}$. Given $0<\delta<1$, we define the set of labels for $\mathcal{C}$ as

$$\mathcal{L}_{\mu, \delta}(\mathcal{C}) := \{f \in \mathcal{F}_\Phi:f\mbox{ is a }(\mu, \delta)\mbox{-label for }\mathcal{C}\}.$$

\end{definition}

Notice that for any $0<\delta_1\leq \delta_2$ and $\mathcal{C}_2\subset \mathcal{C}_1\subset \mathcal{X}$, we have

$$\mathcal{L}_{\mu, \delta_1}(\mathcal{C}_1)\subset \mathcal{L}_{\mu, \delta_2}(\mathcal{C}_2).$$

Now, suppose that $f\in \mathcal{F}_\Phi$ is a $(\mu, \delta)$-label for $\mathcal{C}_1$ and $\mathcal{C}_2$, with $\mathcal{C}_1,\mathcal{C}_2\subset \mathcal{X}$. A natural question one could ask is whether $f$ is a $(\mu, \delta)$-label for $\mathcal{C}_1\cup\mathcal{C}_2$. The answer is negative in general, but we do have the following result.

\begin{proposition}
Let $\mathcal{C}_1,\mathcal{C}_2\subset \mathcal{X}$. Let $f\in \mathcal{F}_\Phi$. If $f$ is a $(\mu,\delta_1)$-label for $\mathcal{C}_1$ and a $(\mu, \delta_2)$-label for $\mathcal{C}_2$, then $f$ is a $(\mu,\delta_1+\delta_2)$-label for $\mathcal{C}_1\cup \mathcal{C}_2$.
\end{proposition}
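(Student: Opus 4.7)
The plan is to construct, from the two intervals witnessing the labels for $\mathcal{C}_1$ and $\mathcal{C}_2$ respectively, a single interval that witnesses the combined label for $\mathcal{C}_1\cup\mathcal{C}_2$, and then to bound its pushforward mass by subadditivity of the measure $f_*(\mu)$.

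More concretely, by Definition~\ref{def:label} applied twice, I would first extract intervals $I_1, I_2\subset\mathbb{R}$ with $0\in I_1\cap I_2$, $f(\mathcal{C}_1)\subset I_1$, $f(\mathcal{C}_2)\subset I_2$, $(f_*\mu)(I_1)<\delta_1$ and $(f_*\mu)(I_2)<\delta_2$. The candidate witness for the union is $I:=I_1\cup I_2$. Two checks are required: that $I$ is an interval (so that it is a legal witness in the sense of Definition~\ref{def:label}) and that $0\in I$; both follow immediately from the fact that any two intervals of $\mathbb{R}$ sharing a common point have a union that is again an interval, and $0$ is in that common intersection.

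It is then immediate that $f(\mathcal{C}_1\cup\mathcal{C}_2)=f(\mathcal{C}_1)\cup f(\mathcal{C}_2)\subset I_1\cup I_2=I$. For the measure bound, finite subadditivity of $f_*(\mu)$ gives
\begin{equation*}
(f_*\mu)(I)=(f_*\mu)(I_1\cup I_2)\leq (f_*\mu)(I_1)+(f_*\mu)(I_2)<\delta_1+\delta_2,
\end{equation*}
so $I$ satisfies~\eqref{eq:condition label} with tolerance $\delta_1+\delta_2$, proving that $f$ is a $(\mu,\delta_1+\delta_2)$-label for $\mathcal{C}_1\cup\mathcal{C}_2$.

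There is essentially no hard step here; the only subtle point is the observation that the union of two intervals with a common point is still an interval, which is what rules out pathological witnesses and allows the straightforward subadditivity estimate to finish the proof. No assumption on disjointness of $\mathcal{C}_1$ and $\mathcal{C}_2$, nor on the geometry of $I_1$ and $I_2$ beyond containing $0$, is needed.
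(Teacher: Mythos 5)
Your proposal is correct and takes essentially the same approach as the paper: form $I = I_1 \cup I_2$ and bound $(f_*\mu)(I)$ by subadditivity. The one extra thing you supply — explicitly verifying that $I_1 \cup I_2$ is an interval because both contain $0$ — is a detail the paper's proof leaves implicit, and you are right that it is the small observation that makes $I$ a legal witness.
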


\begin{proof}
For $i\in \{1,2\}$, since $f$ is a $(\mu, \delta_i)$-label for $\mathcal{C}_i$, there exists an interval $I_i\subset \mathbb{R}$ with $f(\mathcal{C}_i)\cup \{0\} \subset I_i$ such that $(f_*(\mu))(I_i) < \delta_i$. Let $I=I_1\cup I_2$. Then, $f(\mathcal{C}_1\cup \mathcal{C}_2)\cup  \{0\} \subset I$ and $$(f_*(\mu))(I_1\cup I_2) \leq (f_*(\mu))(I_1) + (f_*(\mu))(I_2)< \delta_1+\delta_2$$ and hence $f$ is a $(\mu,\delta_1+\delta_2)$-label for $\mathcal{C}_1\cup \mathcal{C}_2$.
\end{proof}

Therefore, the union of two cloud of points that share a label will also have the same label, but for a different threshold $\delta$.

The following proposition will be useful when it comes to finding labels for a given cloud of points, since it allows us to relax the condition $0\in I$.

\begin{proposition}\label{prop:move labels}
Let $\mathcal{C}\subset \mathcal{X}$, $f\in \mathcal{F}_\Phi$. Assume that there exists an interval $I\subset \mathbb{R}$ with $f(\mathcal{C})\subset I$ such that

$$(f_*(\mu))(I)<\delta$$ for some $\delta>0$. Assume that $\mathcal{F}_\Phi$ contains constants. Then, $f-c$ is a $(\mu,\delta)$-label for $\mathcal{C}$ for all $c\in I$.

\end{proposition}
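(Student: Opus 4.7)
The plan is to verify the three requirements of Definition \ref{def:label} for the potential label $f-c$ by taking as interval the translate $J := I - c = \{x-c : x \in I\}$. This is the only natural candidate, because we must contain both $0$ and the image $(f-c)(\mathcal{C})$, and translating $I$ by $-c$ achieves both at once.

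First I would check that $f-c$ actually lies in the space of potential labels $\mathcal{F}_\Phi$. This is where the hypothesis that $\mathcal{F}_\Phi$ contains constants is used: since $\mathcal{F}_\Phi$ is a vector space, the difference $f-c$ of two of its elements is again in $\mathcal{F}_\Phi$. Next I would observe that $c\in I$ forces $0 = c-c \in J$, and that $f(\mathcal{C}) \subset I$ immediately gives $(f-c)(\mathcal{C}) \subset J$.

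The only remaining item is the pushforward bound. Here I would simply unfold the definition and use the fact that $(f-c)^{-1}(J) = f^{-1}(J + c) = f^{-1}(I)$, so that
\begin{equation*}
((f-c)_*(\mu))(J) \;=\; \mu\bigl((f-c)^{-1}(J)\bigr) \;=\; \mu\bigl(f^{-1}(I)\bigr) \;=\; (f_*(\mu))(I) \;<\; \delta.
\end{equation*}
Combined with the previous two observations this is exactly what Definition \ref{def:label} requires, so $f-c \in \mathcal{L}_{\mu,\delta}(\mathcal{C})$.

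There is no real obstacle here; the proposition is essentially a translation-invariance statement for the definition. The only subtlety worth flagging is the membership $f-c \in \mathcal{F}_\Phi$, which is precisely why the hypothesis that $\mathcal{F}_\Phi$ contains constants appears in the statement and should be mentioned explicitly.
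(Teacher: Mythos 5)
Your proposal is correct and follows essentially the same route as the paper's proof: translate $I$ by $-c$, observe that $0\in J$ and $f-c\in\mathcal{F}_\Phi$ because $\mathcal{F}_\Phi$ is a vector space containing constants, and then push the measure through the translation to get $((f-c)_*(\mu))(J)=(f_*(\mu))(I)<\delta$. The only difference is that you spell out the preimage computation $(f-c)^{-1}(J)=f^{-1}(I)$ explicitly, whereas the paper states it directly as $(f_*(\mu))(J+c)=(f_*(\mu))(I)$; these are the same step.
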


\begin{proof}
Let $c\in I$, and set $J=I-c$. Then, $0\in J$, and since $\mathcal{F}_\Phi$ contains constants, $f-c\in\mathcal{F}_\Phi$. Now,

$$((f-c)_*(\mu))(J)=(f_*(\mu))(J+c) = (f_*(\mu))(I)<\delta,$$ and hence $f-c$ is a $(\mu,\delta)$-label for $\mathcal{C}$.
\end{proof}

As a corollary, we have the following results about the cardinality of $\mathcal{L}_{\mu,\delta}$.

\begin{corollary}
Let $f$ satisfy the assumptions of the previous proposition. If $I$ contains more than one point, then $|\mathcal{L}_{\mu, \delta}(\mathcal{C})|=\infty$.
\end{corollary}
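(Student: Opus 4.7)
The statement is a direct corollary of Proposition \ref{prop:move labels}, so the plan is essentially to package that proposition's conclusion. First, I would observe that if $I$ is an interval containing more than one point then $I$ contains uncountably many points (in particular infinitely many), since any nondegenerate real interval has the cardinality of the continuum.

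Next, for each $c \in I$, Proposition \ref{prop:move labels} yields that $f - c$ is a $(\mu, \delta)$-label for $\mathcal{C}$, so $f - c \in \mathcal{L}_{\mu, \delta}(\mathcal{C})$. The key small verification is that the map $c \mapsto f - c$ is injective: if $c_1 \neq c_2$ then $f - c_1$ and $f - c_2$ differ by the nonzero constant $c_2 - c_1$, hence are distinct elements of $\mathcal{F}_\Phi$ (this uses that $\mathcal{F}_\Phi$ genuinely contains the constants, so distinct real constants correspond to distinct elements of $\mathcal{F}_\Phi$). Combining these two observations gives an injection from the infinite set $I$ into $\mathcal{L}_{\mu, \delta}(\mathcal{C})$, so $|\mathcal{L}_{\mu, \delta}(\mathcal{C})| = \infty$.

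There is no real obstacle here; the only point that requires a moment of thought is the injectivity of $c \mapsto f - c$, but once one interprets elements of $\mathcal{F}_\Phi$ as functions on $\mathcal{X}$ this is immediate. I would write the argument as a two-line proof referencing Proposition \ref{prop:move labels} directly.
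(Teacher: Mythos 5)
Your argument is correct and follows the same route as the paper's one-line proof, which simply observes that $\{f-c\}_{c\in I}$ is an infinite family of $(\mu,\delta)$-labels by Proposition \ref{prop:move labels}. The only thing you add is an explicit check that $c \mapsto f-c$ is injective, a small verification the paper leaves implicit but which is worth noting.
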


\begin{proof}
Follows from the fact that, by the previous proposition, $\{f-c\}_{c\in I}$ is an infinite family of $(\mu, \delta)$-labels for $\mathcal{C}$.
\end{proof}

\begin{corollary}\label{corollary:infinite labels}
Let $f$ be a $(\mu,\delta)$-label for $\mathcal{C}$ such that $f(\mathcal{C})\neq \{0\}$. Then, $|\mathcal{L}_{\mu,\delta}(\mathcal{C})|=\infty$.
\end{corollary}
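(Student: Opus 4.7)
The plan is to deduce this directly from the immediately preceding corollary, by verifying that its hypothesis is automatically satisfied whenever $f(\mathcal{C})\neq\{0\}$. Since $f$ is a $(\mu,\delta)$-label for $\mathcal{C}$, Definition \ref{def:label} gives an interval $I\subset\mathbb{R}$ with $0\in I$, $f(\mathcal{C})\subset I$, and $(f_*\mu)(I)<\delta$. This is precisely the setup of Proposition \ref{prop:move labels} (and its corollary), so I only need to check that this $I$ contains more than one point.

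The hypothesis $f(\mathcal{C})\neq\{0\}$ means there exists $x\in\mathcal{C}$ with $f(x)\neq 0$. Then $I$ contains both $0$ and $f(x)$, so $I$ has at least two distinct points. Applying the preceding corollary (with this $I$) yields $|\mathcal{L}_{\mu,\delta}(\mathcal{C})|=\infty$, as the family $\{f-c\}_{c\in I}$ consists of infinitely many distinct $(\mu,\delta)$-labels for $\mathcal{C}$ (distinctness of $f-c$ as elements of $\mathcal{F}_\Phi$ follows because different constants yield different linear functionals, implicitly assuming $\mathcal{F}_\Phi$ contains constants, as in Proposition \ref{prop:move labels}).

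There is no real obstacle: the work has been done in Proposition \ref{prop:move labels} and its immediate corollary. The only subtle point is the tacit assumption that $\mathcal{F}_\Phi$ contains constants, inherited from Proposition \ref{prop:move labels}; without it, the shifted functions $f-c$ need not lie in $\mathcal{F}_\Phi$ and the conclusion can fail. I would simply note this assumption and then write the proof as a one-line reduction to the previous corollary.
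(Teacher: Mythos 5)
Your argument is correct and is essentially the one the paper intends: Corollary~\ref{corollary:infinite labels} is stated without a separate proof precisely because it is the previous corollary applied to the interval $I$ from Definition~\ref{def:label}, where $f(\mathcal{C})\neq\{0\}$ forces $I$ to contain both $0$ and some nonzero value, hence more than one point. You are right to flag the tacit assumption that $\mathcal{F}_\Phi$ contains constants, which the paper inherits silently from Proposition~\ref{prop:move labels} but does not restate in the corollary.
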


This indicates a big difference between clustering and labelling -- although points are assigned to a single cluster, points will typically have infinite labels since small variations of a label are still labels.

Moreover, notice that $f\in \mathcal{F}_\Phi$ is a $(\mu,\delta)$-label if and only if $\lambda f$ is a $(\mu,\delta)$-label for all $\lambda\in \mathbb{R}\setminus \{0\}$.

However, $\mathcal{L}_{\mu,\delta}(\mathcal{C})$ is not a vector space in general. Indeed, if there exists $f\in \mathcal{L}_{\mu,\delta}(\mathcal{C})$ such that $f(\mathcal{C})\neq \{0\}$ and if $\mathcal{F}_\Phi$ contains constants, then by Proposition \ref{prop:move labels} there exists some $c\in \mathbb{R}$ such that $f-c$ is a $(\mu,\delta)$-label for $\mathcal{C}$. If $\mathcal{L}_{\mu,\delta}(\mathcal{C})$ is a vector space, then $c=f-(f-c)$ would be a $(\mu, \delta)$-label. But this would imply that $1=\mu(\mathcal{X}) < \delta$, which is a contradiction.

\section{Extracting labels from data}\label{sec:algorithm}

In practice, we are given a dataset $\mathcal{C}\subset \mathcal{X}$, and we want to find labels for points contained in $\mathcal{A}$. In othe words, we are interested in extracting labels of subsets of $\mathcal{A}$. Therefore, we are interested in finding

%$$\mathbb{L}_{\mu, \delta}(\mathcal{C}) = \bigcup_{A\subset \mathcal{C}} \mathcal{L}_{\mu, \delta}(\mathcal{A}).$$
$$\mathbb{L}_{\mu, \delta}(\mathcal{A}) = \{ (\mathcal{C},  \mathcal{L}_{\mu, \delta}(\mathcal{C}) ) : \mathcal{C}\subset \mathcal{A}\}.$$

In this section we propose an algorithm to estimate $\mathbb{L}_{\mu, \delta}(\mathcal{A})$ for a given cloud of points $\mathcal{A}\subset \mathcal{X}$ and $\delta>0$, in the case where the feature space $F$ is a $D$-dimensional Hilbert space $(F, (\cdot,\cdot)_F)$. More specifically, the algorithm will find labels for cloud of points in $\mathcal{A}$ containing more than $n_0>0$ points, where $n_0$ is an input of the algorithm that acts as a minimum threshold for the labels.

To achieve this task, we introduce two algorithms -- an algorithm to estimate $\mathcal{L}_{\mu, \delta}(\mathcal{C})$ for some $\mathcal{C}\subset \mathcal{A}\subset \mathcal{X}$, and an algorithm to estimate $\mathbb{L}_{\mu,\delta}(\mathcal{A})$, which will use the first one. Standard and well-known tools -- KPCA and SVD -- are used in the algorithms.

\subsection{Estimating $\mathcal{L}_{\mu,\delta}$}\label{subsec:labels of set}

Let $\mathcal{C}\subset \mathcal{A}\subset \mathcal{X}$. We will try to obtain an estimation of $\mathcal{L}_{\mu, \delta}(\mathcal{C})$. Since by Corollary \ref{corollary:infinite labels} we know that $\mathcal{C}$ will typically have infinite labels, we will try to obtain a candidate for a $(\mu,\delta)$-label for $\mathcal{C}$. Then, we check if this candidate is indeed a $(\mu, \delta)$-label for $\mathcal{C}$, and we will either return the empty set or this candidate as an estimate of $\mathcal{L}_{\mu,\delta}(\mathcal{C})$.

\begin{algorithm}[t]
	\SetAlgoLined
	\SetKwInOut{Parameters}{Parameters}
	\Parameters           {$\mathcal{C}\subset \mathcal{X}$: Finite set to be labelled  \\
		$\Phi:\mathcal{X}\rightarrow F$: Feature map, where $F$ is a Hilbert space with inner product $(\cdot,\cdot)_F$\\
		$\mu$: Background noise \\
		$\delta>0$: Label probability threshold \\
		 }
	\KwOut{Estimation of $\mathcal{L}_{\mu,\delta}(\mathcal{C})$.}
	\BlankLine
	
	Obtain the right singular vector corresponding to the least singular value of the linear operator $v\in F' \mapsto (\langle v, \Phi(x)\rangle)_{i=1}^{|\mathcal{C}|} \in \mathbb{R}^N$ using SVD\;
	Set $f\leftarrow \ell \circ \Phi \in \mathcal{F}_\Phi$\;
	Set $I\subset \mathbb{R}$ to be the smallest closed interval such that $f(\mathcal{C})\subset I$\;
	Estimate $(f_*(\mu))(I)$\;
	\uIf{$(f_*(\mu))(I)<\delta$}{
		\uIf{$0\in I$} {
			$\mathcal{L}_{\mu, \delta}(\mathcal{C})\leftarrow \{f\}$\;
		}
		\uElseIf{$\mathcal{F}_\Phi$ contains constants} {
			$I_\star\leftarrow $ midpoint of $I$\;
			$\mathcal{L}_{\mu, \delta}(\mathcal{C})\leftarrow \{f-I_\star\}$\;
		}
		\Else {
			$\mathcal{L}_{\mu, \delta}(\mathcal{C})\leftarrow \varnothing$\;
		}
		
	}
	\Else {
		$\mathcal{L}_{\mu, \delta}(\mathcal{C})\leftarrow \varnothing$\;
	}
	\Return{$\mathcal{L}_{\mu, \delta}(\mathcal{C})$}
	\caption{Algorithm to estimate $\mathcal{L}_{\mu,\delta}(\mathcal{C})$.}
	\label{algo:labels}
\end{algorithm}

This candidate will be obtained assuming that the feature space $F$ is a $D$-dimensional Hilbert space with an inner product $(\cdot,\cdot)_F$, in order to then apply Kernel Principal Component Analysis (KPCA). KPCA was first introduced in \citep{KPCA}, and it has since been applied in many situations where there exists some \textit{nonlinear pattern} in the data, such as in certain classification problems \citep{APPLICATION-KPCA1, APPLICATION-KPCA2} or clustering problems \citep{APPLICATION-KPCA3, APPLICATION-KPCA4}.

Kernel Principal Component Analysis is a natural nonlinear extension of Principal Component Analysis \citep{PCA}. Essentially, it begins by mapping the dataset to a feature space via a given feature map, in order to then apply PCA.

%The idea now is to extract the first $D-1$ kernel principal components of $\mathcal{C}$ using Algorithm \ref{algo:kpca}, $\{v^k\}_{k=1}^{D-1}$. Then, $\left (\mbox{span}\{v^k\}_{k=1}^{D-1}\right )^\perp\subset F'$ has dimension 1, so we can extract $0\neq \ell \in \left (\mbox{span}\{v^k\}_{k=1}^{D-1}\right )^\perp$ using SVD. The function $f=\ell \circ \Phi \in \mathcal{F}_\Phi$ will be our candidate for $(\mu,\delta)$-label for $\mathcal{C}$. This procedure leads to Algorithm \ref{algo:labels}.

The idea is to extract the first $D-1$ kernel principal components of $\mathcal{A}$, $\{v^k\}_{k=1}^{D-1}$. Then, $\left (\mbox{span}\{v^k\}_{k=1}^{D-1}\right )^\perp$ has dimension 1, so we can extract $0\neq \ell \in \left (\mbox{span}\{v^k\}_{k=1}^{D-1}\right )^\perp$. Then, $f=\ell\circ\Phi\in \mathcal{F}_\Phi$ will be our candidate for $(\mu,\delta)$-label for $\mathcal{C}$.

Let $L = \dfrac{1}{|\mathcal{C}|} \sum_{x\in \mathcal{C}}\Phi(x)\otimes \Phi(x)$. Following \citep{KPCA}, the first $D-1$ kernel principal components of $\mathcal{C}$ are given by the first $D-1$ eigenvectors of $L$. Since $L$ is symmetric its eigenvectors are orthogonal, and therefore we may pick $0\neq \ell\in \left (\mbox{span}\{v^k\}_{k=1}^{D-1}\right )^\perp$ to be the least eigenvector of $L$. However, the least eigenvector of $L$ is given by the least right singular vector of the linear operator $v\in F' \mapsto (\langle v, \Phi(x)\rangle)_{i=1}^{|\mathcal{C}|} \in \mathbb{R}^N$ which can be found using SVD. Notice that in the case where $F=\mathbb{R}^D$, this linear operator is just a matrix in $\mathbb{R}^{|\mathcal{C}|\times D}$.

This procedure is shown in Algorithm \ref{algo:labels}, where Proposition \ref{prop:move labels} was also taken into account. We are not claiming that this algorithm can find all the labels for a dataset. We are claiming using the standard SVD provides a robust methodology to find these labels.

\subsection{Estimating $\mathbb{L}_{\mu,\delta}$}

Now, the objective will be to estimate $\mathbb{L}_{\mu,\delta}(\mathcal{A})$ for a given $\mathcal{A}\subset \mathcal{X}$. However, when we label subsets $\mathcal{C}\subset \mathcal{A}$, we will only consider subsets with a certain minimum number of elements, i.e. $|\mathcal{C}|\geq n_0 \in \mathbb{N}$. Exploiting the fact that $\mathcal{L}_{\mu, \delta}(\mathcal{A}_1)\subset \mathcal{L}_{\mu,\delta}(\mathcal{A}_2)$ whenever $\mathcal{A}_2\subset \mathcal{A}_1$ and using Algorithm \ref{algo:labels} to estimate $\mathcal{L}_{\mu,\delta}$, we propose Algorithm \ref{algo:labelling}.

\begin{algorithm}[t]
	\SetAlgoLined
	\SetKwInOut{Parameters}{Parameters}
	\Parameters           {$\mathcal{A}\subset \mathcal{X}$: Finite set to be labelled  \\
		$\Phi:\mathcal{X}\rightarrow F$: Feature map, where $F$ is a Hilbert space with inner product $(\cdot,\cdot)_F$\\
		$\mu$: Background noise \\
		$\delta>0$: Label probability threshold \\
		$n_0 \in \{1, 2, \ldots, |\mathcal{A}|\}$: Minimum size of labelled points.\\
		$N\in \mathbb{N}$: Number of steps.
		 }
	\KwOut{An estimation of $\mathbb{L}_{\mu, \delta}(\mathcal{A})$.}
	\BlankLine
	initialise $\mathbb{L}_{\mu, \delta}(\mathcal{A})\leftarrow \varnothing$\;

	\For{{\upshape each	}$i\in \{1, 2, \ldots, N\}$ }{
		choose randomly a subset $\mathcal{C}\subset \mathcal{A}$ of size $n_0$\;

		estimate $L\leftarrow \mathcal{L}_{\mu, \delta}(\mathcal{C})$\;
		\If{$L\neq \varnothing$}{
			\For{{\upshape each } $p\in \mathcal{A}\setminus {\mathcal{C}}$}{
				estimate $L^\ast \leftarrow \mathcal{L}_{\mu, \delta}(\mathcal{C}\cup \{p\})$\;
				\If{$L^\ast \neq \varnothing$}{
					$\mathcal{C}\leftarrow \mathcal{C}\cup \{p\}$\;
					$L \leftarrow L^\ast$\;
				}
			}
		\If{$\not\exists (\mathcal{D}, \mathcal{L})\in \mathbb{L}_{\mu, \delta}(\mathcal{A})$ {\upshape such that }$\mathcal{D}=\mathcal{C}$}{
			$\mathbb{L}_{\mu, \delta}(\mathcal{A}) \leftarrow \mathbb{L}_{\mu, \delta}(\mathcal{A}) \cup \{(\mathcal{C}, L)\}$\;
		}
		}

	}
	\Return{$\mathbb{L}_{\mu, \delta}(\mathcal{A})$}
	\caption{Algorithm to estimate $\mathbb{L}_{\mu,\delta}(\mathcal{A})$.}
	\label{algo:labelling}
\end{algorithm}

\section{Analysis of false discoveries}\label{sec:false discoveries}

It is important to analyse what is the probability of finding false discoveries. In other words, given some $\delta>0$, what is the probability that there exists at least a $(\mu,\delta)$-label? How does this depend on the value of $\delta$ and the sample size? In what sense does this depend on the feature map we select? How can we pick an appropriate $\delta$ to avoid having false discoveries with high probability?

In this section, we will study and answer these questions. We will do so by analysing the spectral properties of some random matrices. More specifically, we will be interested in the behaviour of the least singular value of some random matrices on the feature space. As we will see, obtaining lower bounds for the least singular value will allow us to answer the questions stated above, using the fact that the least singular value of a linear compact operator $\Gamma$ is characterised as $$\smin(\Gamma) = \inf _{ \lVert x \rVert = 1} \lVert \Gamma x \rVert.$$

Assume that the feature space is a $D$-dimensional Hilbert space with inner product $(\cdot,\cdot)_F$. Let $X_1, \ldots, X_N$ be $N$ points independently sampled according to $\mu$. Define the linear compact operator

\begin{align*}
\Gamma_\Phi : F' &\rightarrow \mathbb{R}^N\\
\ell & \mapsto (\langle \ell, \Phi(X_i)\rangle)_{i=1}^N.
\end{align*} In this section, we will study the least singular value of $\Gamma_\Phi$ since, as we will see, being able to control the least singular value will give us valuable insights on the study of the problem of false discoveries.

A lot of work has been done in the task of analysing the least singular values of rectangular random matrices. For instance, in \citep{BAI}, the authors prove that under some assumptions, $s_{\text{min}}(\Gamma_\Phi)/\sqrt{N}$ converges to some positive limit, where $s_\text{min}(\Gamma_\Phi)$ denotes the least singular value of $\Gamma_\Phi$. In \citep{RECTANGULARSINGULAR}, on the other hand, the authors prove a non-asymptotic estimate of the least singular value of a rectangular random matrix. In both cases -- and in most of the published results -- a big assumption that is made is that the entries of the random matrix are independent, which is not applicable in our case. However, $X_1,\ldots,X_N$ were independently sampled, so the objective is to exploit this feature to extract useful estimates for the least singular value of $\Gamma_\Phi$.

\subsection{Singular values of a class of linear operators}

Let $X$ be a random variable in $F$. Let $X_1, \ldots, X_N$ be $N$ independent copies of $X$, and define the linear compact operator

\begin{align*}
\Gamma : F' &\rightarrow \mathbb{R}^N\\
\ell & \mapsto (\langle \ell, X_i\rangle)_{i=1}^N.
\end{align*}

We will now study the smallest singular value of $\Gamma$, which will be denoted by $\smin(\Gamma)$. However, we will have to make some assumptions on the random variable $X$. A crucial assumption will be related to the so-called Orlicz norm $\lVert X \rVert_{\psi_\alpha}$,where $\alpha\geq 1$. It is defined as

$$\lVert X \rVert_{\psi_\alpha} = \inf \left \{C>0:\mathbb{E}\left (\exp \left (\dfrac{\lVert X \rVert_F^\alpha}{C^\alpha}\right )\right )\leq 2\right  \}.$$

This norm controls the tail of $\lVert X \rVert_F$, where $\lVert \cdot\rVert_F$ denotes the norm in $F$, since one can show that if $\lVert X \rVert_{\psi_\alpha} < \infty$, then $\mathbb{P}(\lVert X \rVert_F>\lambda) \leq 2\exp (-\lambda^\alpha / \lVert X \rVert_{\psi_\alpha}^\alpha)$.

Examples of such random variables include Gaussian random variables for $\alpha=2$, and bounded random variables, which have finite $\psi_\alpha$-norms for all $\alpha\geq 1$.

In this section, the following lemma from \citep{BOOKVERSHYNIN} will be useful:

\begin{lemma}[{{\cite[Lemma 5.36]{BOOKVERSHYNIN}}}]\label{lemma:smin}
Let $A:F'\rightarrow \mathbb{R}^N$ be a linear operator such that

$$\lVert A^\ast A - I\rVert\leq \delta \vee \delta^2$$ for some $\delta>0$, where the norm is the operator norm. Then,

$$1-\delta \leq \smin(A) \leq 1+\delta.$$
\end{lemma}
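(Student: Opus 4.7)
The plan is to translate the operator-norm hypothesis on $A^\ast A - I$ into a two-sided bound on $\|Ax\|^2$ over the unit sphere, and then take square roots carefully. The starting point is the pair of variational characterisations
\[\smin(A)^2 = \inf_{\|x\|=1} \langle A^\ast A\, x, x\rangle, \qquad \|A\|^2 = \sup_{\|x\|=1} \langle A^\ast A\, x, x\rangle,\]
which follow from $\|Ax\|^2 = \langle A^\ast A\, x, x\rangle$ and the fact that $A^\ast A$ is self-adjoint and non-negative. This reduces the problem to controlling the quadratic form of $A^\ast A - I$ on unit vectors.

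Second, for any unit $x \in F'$ I would apply Cauchy--Schwarz (or the definition of operator norm on a self-adjoint operator) to obtain
\[|\langle A^\ast A\, x, x\rangle - 1| = |\langle (A^\ast A - I)x, x\rangle| \leq \|A^\ast A - I\| \leq \delta \vee \delta^2.\]
Taking infimum and supremum over unit vectors, and using $\smin(A) \leq \|A\|$, this yields the quadratic sandwich
\[1 - (\delta \vee \delta^2) \leq \smin(A)^2 \leq \|A\|^2 \leq 1 + (\delta \vee \delta^2).\]

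The remaining step, which is where the awkward $\delta \vee \delta^2$ on the right-hand side actually pays off, is to convert this into the clean linear bound $1 - \delta \leq \smin(A) \leq 1 + \delta$. I would split into two cases. If $\delta \leq 1$, then $\delta \vee \delta^2 = \delta$, so $\smin(A) \geq \sqrt{1-\delta} \geq 1-\delta$ (the last inequality follows from $(1-\delta)^2 \leq 1 - \delta$ on $[0,1]$) and $\smin(A) \leq \sqrt{1+\delta} \leq 1 + \delta$. If $\delta > 1$, the lower bound $1 - \delta < 0 \leq \smin(A)$ is automatic, while $\delta \vee \delta^2 = \delta^2$ gives $\smin(A) \leq \sqrt{1 + \delta^2} \leq \sqrt{(1+\delta)^2} = 1 + \delta$.

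The main obstacle is essentially bookkeeping: the hypothesis is stated in terms of $\delta \vee \delta^2$ precisely so that $\sqrt{1 \pm (\delta \vee \delta^2)}$ can be controlled by $1 \pm \delta$ in both regimes, and recognising this removes any need for more refined estimates. Every other ingredient is an immediate consequence of the spectral theorem for self-adjoint operators on a Hilbert space.
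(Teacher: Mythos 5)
This lemma is stated in the paper as a citation to Vershynin's lecture notes and is not reproved there, so there is no in-paper proof to compare against. Your argument is correct and complete: the self-adjointness of $A^\ast A - I$ gives $\bigl|\langle A^\ast A\, x, x\rangle - 1\bigr| \leq \lVert A^\ast A - I\rVert$ on unit vectors, the variational characterisations turn this into the quadratic sandwich $1-(\delta\vee\delta^2)\leq \smin(A)^2\leq \lVert A\rVert^2\leq 1+(\delta\vee\delta^2)$, and the two-case analysis correctly extracts $1-\delta\leq\smin(A)\leq 1+\delta$. This is essentially the same proof as in Vershynin's source; the only cosmetic difference is that Vershynin isolates the scalar implication ``$z\geq 0$ and $|z^2-1|\leq\delta\vee\delta^2$ imply $|z-1|\leq\delta$'' as a separate elementary observation and applies it to every singular value via the spectral decomposition of $A^\ast A$, whereas you apply the same arithmetic only at the extremes via the min/max variational formulas, which is all the stated inequality needs.
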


A key result that will be used in this section is the following theorem, first proved in \citep{INDEPENDENTROWS}.

\begin{theorem}[{{\cite[Theorem 2.1]{INDEPENDENTROWS}}}]\label{th:independent rows}
Let $X$ be a random variable in $F$ such that:

\begin{enumerate}
\item There exists some $\rho>0$ such that $(\mathbb{E}[|\langle \theta, X\rangle |^4)^{1/4}\leq \rho$ for all $\theta\in F'$ with $\lVert \theta \rVert_{F'}=1$.
\item $\lVert X \rVert_{\psi_\alpha < \infty}$ for some $\alpha\geq 1$.
\end{enumerate}

Let $\Lambda := \mathbb{E} (X^\ast X)$. Set $p_{D, N} = \left ( \frac{1}{B_{D, N} \vee A_{D,N}^2}\right )^\beta$, where

$$A_{D, N} := \lVert X \rVert_{\psi_\alpha}\dfrac{\sqrt{\log D}(\log N)^{1/\alpha}}{\sqrt{N}},\quad B_{D,N}:=\dfrac{\rho^2}{\sqrt{N}}+\lVert \Lambda\rVert^{1/2} A_{D, N},$$ and $\beta=(1+2/\alpha)^{-1}$.

Then, there exists a constant $c>0$ such that for all $\delta\geq 0$,

$$\mathbb{P}\left (\left\lVert \dfrac{1}{N} \Gamma^\ast \Gamma - \Lambda\right \rVert \geq \delta \right )\leq \exp (-c\delta^\beta p_{D,N}).$$
\end{theorem}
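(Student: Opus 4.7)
The plan is to reduce the operator-norm bound on the self-adjoint operator
\[
M := \tfrac{1}{N}\Gamma^\ast\Gamma-\Lambda = \tfrac{1}{N}\sum_{i=1}^N\bigl(X_i\otimes X_i - \Lambda\bigr)
\]
to a collection of scalar concentration inequalities by way of an $\epsilon$-net argument on the unit sphere of $F'$. Standard covering estimates furnish an $\epsilon$-net $\mathcal{N}_\epsilon$ of $\{\theta\in F':\lVert\theta\rVert=1\}$ with $|\mathcal{N}_\epsilon|\leq (3/\epsilon)^D$, and since $M$ is self-adjoint one has $\lVert M\rVert\leq (1-2\epsilon)^{-1}\sup_{\theta\in\mathcal{N}_\epsilon}|\langle M\theta,\theta\rangle|$. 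This covering is the source of the $\sqrt{\log D}$ factor appearing in $A_{D,N}$.

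For a fixed unit $\theta\in F'$, write
\[
\langle M\theta,\theta\rangle = \tfrac{1}{N}\sum_{i=1}^N Y_i(\theta),\qquad Y_i(\theta):=\langle\theta,X_i\rangle^2-\mathbb{E}\langle\theta,X\rangle^2,
\]
so that the quantity of interest is a sum of i.i.d.\ mean-zero scalar random variables. The two hypotheses on $X$ control these summands in complementary ways: the fourth-moment bound yields $\operatorname{Var}(Y_i(\theta))\leq \mathbb{E}\langle\theta,X\rangle^4\leq \rho^4$, which produces the $\rho^2/\sqrt{N}$ variance contribution appearing inside $B_{D,N}$, while the $\psi_\alpha$ assumption gives $\lVert\langle\theta,X\rangle^2\rVert_{\psi_{\alpha/2}}\leq \lVert X\rVert_{\psi_\alpha}^2$, so the $Y_i(\theta)$ are sub-$\psi_{\alpha/2}$ and admit a heavy-tailed Bernstein treatment.

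I would then apply a Bernstein-type inequality for sums of independent $\psi_{\alpha/2}$ random variables by truncation. Concretely, split each $Y_i(\theta)$ at a level of order $\lVert X\rVert_{\psi_\alpha}^2(\log N)^{2/\alpha}$, apply classical Bernstein to the truncated part (whose variance is still dominated by $\rho^4$), and dispatch the tail part using the $\psi_\alpha$ estimate $\mathbb{P}(\lVert X\rVert_F>\lambda)\leq 2\exp(-\lambda^\alpha/\lVert X\rVert_{\psi_\alpha}^\alpha)$ together with a union bound over the $N$ summands. Balancing the Gaussian-type regime $\exp(-c\delta^2 N/\rho^4)$ against the heavy-tail regime $\exp(-c(\delta N)^{\alpha/2}/\lVert X\rVert_{\psi_\alpha}^\alpha)$ at the optimal truncation level is precisely what produces the exponent $\beta=(1+2/\alpha)^{-1}$. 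A final union bound over the $\epsilon$-net, with $\epsilon$ a fixed small constant, absorbs the $(3/\epsilon)^D$ factor because $A_{D,N}^2$ already encodes a $\log D$ penalty inside $p_{D,N}$.

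The main obstacle is this truncation/balancing step: a naive sub-exponential Bernstein produces bounds that do not interpolate correctly between the variance-dominated regime (driven by $\rho^2$) and the heavy-tail regime (driven by $\lVert X\rVert_{\psi_\alpha}$), and it is the careful choice of truncation level, coordinated with the size of the sphere net, that recovers the sharp exponent $\beta$ and the precise form of $A_{D,N}$ and $B_{D,N}$. Once this optimization is done, consolidating the variance and heavy-tail contributions into the single quantity $B_{D,N}\vee A_{D,N}^2$ and raising $\delta$ to the power $\beta$ yields the stated bound $\exp(-c\delta^\beta p_{D,N})$.
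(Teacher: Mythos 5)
The statement you are trying to prove is not proven in the paper at all: it is imported verbatim, cited as Theorem 2.1 of the reference \texttt{INDEPENDENTROWS} (``A key result that will be used in this section is the following theorem, first proved in\ldots''). So there is no ``paper's own proof'' to compare against, and any reconstruction has to be judged against the cited source rather than against anything in this manuscript.

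On the merits of the sketch itself: the general plan (reduce to a scalar quadratic form, truncate, apply a Bernstein-type inequality for heavy-tailed summands, balance the Gaussian and heavy-tail regimes to obtain the exponent $\beta=(1+2/\alpha)^{-1}$) captures the right ingredients, but the $\epsilon$-net step does not do what you claim. A fixed-$\epsilon$ net of the unit sphere of a $D$-dimensional space has cardinality $e^{cD}$, so the union bound demands a per-direction tail of order $e^{-cD}$. Running Bernstein for $\frac{1}{N}\sum_i Y_i(\theta)$ with variance $\rho^4$ and then paying the $e^{cD}$ union-bound cost forces the threshold $\delta$ in the variance-dominated regime up to $\rho^2\sqrt{D/N}$; the theorem instead has the dimension-free term $\rho^2/\sqrt{N}$ inside $B_{D,N}$. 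That extra $\sqrt{D}$ is a genuine discrepancy, not a constant. Relatedly, your assertion that ``$A_{D,N}^2$ already encodes a $\log D$ penalty inside $p_{D,N}$'' which ``absorbs the $(3/\epsilon)^D$ factor'' is backwards: a $\log D$ in the threshold cannot cancel an $e^{cD}$-sized union bound, and in fact the $\sqrt{\log D}$ in $A_{D,N}$ multiplies $\lVert X\rVert_{\psi_\alpha}$, which for isotropic $X$ already carries its own $\sqrt{D}$ --- it is not the surrogate for the covering number you are treating it as. Results of this type (dimension-free variance term, $\log D$ rather than $D$ in the heavy-tail term) are obtained in the cited literature by finer machinery --- decoupling, symmetrization, and chaining or majorizing-measure bounds for the empirical process $\theta\mapsto \frac{1}{N}\sum_i(\langle\theta,X_i\rangle^2-\mathbb{E}\langle\theta,X\rangle^2)$, often combined with Talagrand's concentration inequality for the bounded part after truncation --- not by a pointwise union bound over a sphere net. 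As written, your sketch would prove a weaker statement with an extra $\sqrt{D}$ loss in the $\rho^2$ term, and so does not reconstruct the cited theorem.
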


Joining these two results, we can prove the following theorem:

\begin{theorem}\label{th:estimate smin}
Let $X$ be a random variable taking values in $F$ that satisfies the following conditions:

\begin{enumerate}
\item $X$ is isotropic, that is, $\mathbb{E}(X^\ast X)=I$.
\item $\lVert X \rVert_{\psi_\alpha} < \infty$ for some $\alpha\geq 1$.
\end{enumerate}

Let $\gamma\leq 1/2$. Set, for $t>0$,

$$p_{D, N, t, \gamma} = t^\beta N^{\beta(\gamma-1/2)} p_{D,N}.$$ Then, with probability at least $1-\exp(-cp_{D, N, t, \gamma})$,

\begin{equation}\label{eq:estimate smin}
\sqrt{N} - tN^\gamma \leq \smin(\Gamma) \leq \sqrt{N} + tN^\gamma.
\end{equation}
\end{theorem}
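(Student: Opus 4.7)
The plan is to reduce the two-sided bound on $\smin(\Gamma)$ to a concentration estimate on $\left\|\tfrac{1}{N}\Gamma^{\ast}\Gamma - I\right\|$ via Lemma \ref{lemma:smin}, and to obtain that concentration directly from Theorem \ref{th:independent rows} in the isotropic case $\Lambda = I$. Concretely, I would set $A := \Gamma/\sqrt{N}$, so that $A^{\ast}A = \tfrac{1}{N}\Gamma^{\ast}\Gamma$ and $\smin(\Gamma) = \sqrt{N}\,\smin(A)$; hence the target conclusion \eqref{eq:estimate smin} is equivalent to $1 - tN^{\gamma-1/2} \leq \smin(A) \leq 1 + tN^{\gamma-1/2}$.

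First I would verify that the hypotheses of Theorem \ref{th:independent rows} hold in the setting of Theorem \ref{th:estimate smin}. The second hypothesis (finite $\psi_\alpha$-norm) is assumed outright. For the first, isotropy gives $\mathbb{E}[|\langle \theta, X\rangle|^{2}] = \|\theta\|_{F'}^{2}$, and moreover $\|\langle \theta, X\rangle\|_{\psi_\alpha} \leq \|\theta\|_{F'}\|X\|_{\psi_\alpha}$, so the standard equivalence between Orlicz and moment norms yields a constant $\rho \leq C\|X\|_{\psi_\alpha}$ with $(\mathbb{E}[|\langle \theta, X\rangle|^{4}])^{1/4} \leq \rho$ for every unit $\theta \in F'$.

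Next I would set $\delta := tN^{\gamma - 1/2}$. Because $\gamma \leq 1/2$ we have $\delta \leq t$, and in the regime $\delta \leq 1$ (the only regime where the claim has content, since otherwise $\sqrt{N} - tN^{\gamma} \leq 0$) one has $\delta \vee \delta^{2} = \delta$. Applying Theorem \ref{th:independent rows} with this $\delta$ and $\Lambda = I$ gives
$$\mathbb{P}\!\left(\left\|\tfrac{1}{N}\Gamma^{\ast}\Gamma - I\right\| \geq \delta\right) \leq \exp(-c\delta^{\beta} p_{D,N}) = \exp\!\left(-c\,t^{\beta} N^{\beta(\gamma-1/2)} p_{D,N}\right) = \exp(-c\,p_{D,N,t,\gamma}).$$
On the complementary event $\|A^{\ast}A - I\| \leq \delta$, Lemma \ref{lemma:smin} delivers $1 - \delta \leq \smin(A) \leq 1 + \delta$; multiplying through by $\sqrt{N}$ produces \eqref{eq:estimate smin}.

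The substantive work has already been carried by the two cited results, so the main obstacle I anticipate is purely bookkeeping: tuning $\delta = tN^{\gamma - 1/2}$ so that the exponent $\delta^{\beta} p_{D,N}$ coincides with $p_{D,N,t,\gamma}$ as defined, and handling the edge case $\delta > 1$ where the claimed lower bound is already vacuous. The constraint $\gamma \leq 1/2$ is precisely what ensures that $\delta \to 0$ as $N$ grows, keeping us in the regime where Lemma \ref{lemma:smin} can be invoked with $\delta \vee \delta^{2} = \delta$.
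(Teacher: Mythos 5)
Your proof is correct and follows essentially the same path as the paper: set $\delta = tN^{\gamma-1/2}$, apply Theorem \ref{th:independent rows} with $\Lambda = I$ to control $\lVert \tfrac{1}{N}\Gamma^{\ast}\Gamma - I\rVert$, then rescale via Lemma \ref{lemma:smin} with $A = \Gamma/\sqrt{N}$. The only difference is stylistic: you spell out the verification of the fourth-moment hypothesis of Theorem \ref{th:independent rows} and discuss the $\delta \vee \delta^2$ edge case, both of which the paper leaves implicit; the latter is in fact unnecessary, since $\delta \leq \delta \vee \delta^2$ for all $\delta > 0$, so the bound $\lVert A^{\ast}A - I\rVert \leq \delta$ always suffices to invoke Lemma \ref{lemma:smin} without any case split.
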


\begin{proof}
The assumptions on $X$ imply that the assumptions from Theorem \ref{th:independent rows} are satisfied as well, and that $\Lambda = I$. Therefore, applying Theorem \ref{th:independent rows} with $\delta = tN^{\gamma - 1/2}$, we have

$$\left \lVert \dfrac{1}{N} \Gamma^\ast \Gamma - I \right \rVert \leq tN^{\gamma-1/2}$$ with probability at least $1-\exp(-c t^\beta N^{\beta(\gamma-1/2)} p_{D,N})=1-\exp(-cp_{D,N,t,\gamma})$.

Therefore, applying Lemma \ref{lemma:smin} with $A=\frac{1}{\sqrt{N}} \Gamma$, we conclude that

$$\sqrt{N} - tN^\gamma \leq \smin(\Gamma) \leq \sqrt{N} + tN^\gamma$$ with probability higher than $1-\exp(-cp_{D,N,t,\gamma})$.
\end{proof}

\begin{remark}\label{remark:pdnt}
The term $p_{D,N,t,\gamma}$ grows faster than $t^\beta N^{r\beta}$ for all $r<\gamma$. Thus, there is a trade-off between how fast $p_{D,N,t,\gamma}$ grows, and how good the estimate \eqref{eq:estimate smin} is.
\end{remark}

We may now move to the operator $\Gamma_\Phi$, and deduce a result similar to Theorem \ref{th:estimate smin}. But first, we will state and proof the following lemma.

\begin{lemma}\label{lemma: orlicz}
Let $\mathcal{X}$ be a Banach space. Let $\Phi:\mathcal{X} \rightarrow F$ be a feature map that satisfies the growth condition $\lVert \Phi(x) \rVert_F \leq C_0(1+\lVert x \rVert_{\mathcal{X}}^k)$ for all $x\in \mathcal{X}$, where $C_0>0$ is a constant and $k>0$. Let $X$ be a random variable taking values in $\mathcal{X}$, and $\alpha\geq k$ such that $\lVert X \rVert_{\psi_\alpha} < \infty$. Then, $\lVert \Phi(X) \rVert_{\psi_{\alpha/k}}<\infty$.
\end{lemma}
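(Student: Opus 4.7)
The plan is to unwind the definition of the Orlicz norm and show that the required infimum is finite by exhibiting a suitable constant $C$. Explicitly, I want to find $C>0$ such that
\[
\mathbb{E}\bigl(\exp(\lVert \Phi(X)\rVert_F^{\alpha/k}/C^{\alpha/k})\bigr)\leq 2.
\]
The growth condition immediately gives $\lVert \Phi(X)\rVert_F^{\alpha/k}\leq C_0^{\alpha/k}(1+\lVert X\rVert_{\mathcal{X}}^k)^{\alpha/k}$. Since $\alpha\geq k$, the exponent $p:=\alpha/k$ lies in $[1,\infty)$, so I would apply the elementary convexity inequality $(1+a)^p\leq 2^{p-1}(1+a^p)$ (valid for $p\geq 1$, $a\geq 0$) to obtain $\lVert \Phi(X)\rVert_F^{\alpha/k}\leq A\bigl(1+\lVert X\rVert_{\mathcal{X}}^{\alpha}\bigr)$ with $A:=2^{\alpha/k-1}C_0^{\alpha/k}$. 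This is the key step that reduces the problem from controlling $\exp$ of a power of $\lVert X\rVert_{\mathcal{X}}^k$ to $\exp$ of $\lVert X\rVert_{\mathcal{X}}^{\alpha}$, which is exactly what the hypothesis $\lVert X\rVert_{\psi_\alpha}<\infty$ controls.

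Next, write $b:=A/C^{\alpha/k}$ and split the exponential:
\[
\exp\bigl(\lVert \Phi(X)\rVert_F^{\alpha/k}/C^{\alpha/k}\bigr)\leq e^{b}\exp\bigl(b\,\lVert X\rVert_{\mathcal{X}}^{\alpha}\bigr).
\]
Setting $R:=\lVert X\rVert_{\psi_\alpha}$ and $s:=bR^{\alpha}$, I can rewrite the random factor as $\exp\bigl(s\,\lVert X\rVert_{\mathcal{X}}^{\alpha}/R^{\alpha}\bigr)$. Choosing $C$ large enough so that $s\leq 1$, the scalar inequality $\exp(sz)\leq 1+s(e^{z}-1)$ for $s\in(0,1]$, $z\geq 0$ (a straightforward consequence of convexity of $z\mapsto e^z$ applied with the endpoints $0$ and $z$) gives
\[
\mathbb{E}\bigl(\exp(s\,\lVert X\rVert_{\mathcal{X}}^{\alpha}/R^{\alpha})\bigr)\leq 1+s\bigl(\mathbb{E}\exp(\lVert X\rVert_{\mathcal{X}}^{\alpha}/R^{\alpha})-1\bigr)\leq 1+s,
\]
by the definition of $R=\lVert X\rVert_{\psi_\alpha}$. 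Combining the two bounds yields
\[
\mathbb{E}\bigl(\exp(\lVert \Phi(X)\rVert_F^{\alpha/k}/C^{\alpha/k})\bigr)\leq e^{b}(1+bR^{\alpha}).
\]

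To conclude, I note that the right-hand side tends to $1$ as $b\downarrow 0$, so there exists $b_0>0$ (depending only on $R$) for which $e^{b_0}(1+b_0 R^{\alpha})\leq 2$. Choosing $C:=(A/b_0)^{k/\alpha}$ ensures $b\leq b_0$ and also $s\leq 1$ (possibly after shrinking $b_0$ once and for all), so this $C$ is admissible in the defining infimum of $\lVert \Phi(X)\rVert_{\psi_{\alpha/k}}$, proving it is finite. The one delicate point is ensuring both factors $e^b$ and $\mathbb{E}\exp(b\lVert X\rVert_{\mathcal{X}}^{\alpha})$ are simultaneously close enough to $1$; this is handled by making $C$ large enough, but it is what forces us to use the inequality $\exp(sz)\leq 1+s(e^z-1)$ rather than the more naive bound $\mathbb{E}\exp(bY)\leq 2$ that one gets by taking $C$ comparable to $R$ via direct substitution.
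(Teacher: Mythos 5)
Your proof is correct and follows essentially the same approach as the paper: apply the growth condition, use $\alpha/k\geq 1$ to bound $(1+\lVert X\rVert_{\mathcal{X}}^k)^{\alpha/k}$ by a constant times $1+\lVert X\rVert_{\mathcal{X}}^{\alpha}$, and then invoke $\lVert X\rVert_{\psi_\alpha}<\infty$. The paper's version stops at a displayed inequality with a $\lesssim$ and declares finiteness; your version supplies the missing quantitative step, namely the convexity bound $\exp(sz)\leq 1+s(e^z-1)$ and the explicit choice of $C$ making the expectation $\leq 2$, which is exactly the detail needed to turn the paper's sketch into a complete argument.
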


\begin{proof}
Notice that since $\alpha\geq k$, $\alpha/k\geq 1$. Now, given $C>0$,

$$\mathbb{E} \left (\exp \left (\dfrac{\lVert \Phi(X) \rVert^{\alpha/k}}{C}\right )\right )\lesssim \mathbb{E}\left (\exp \left (\dfrac{1 + \lVert X \rVert^\alpha}{C/C_0}\right )\right )$$ and therefore $\lVert \Phi(X)\rVert_{\psi_{\alpha/k}}<\infty.$
\end{proof}

We may now state and proof the following theorem about the least singular value of $\Gamma_\Phi$:

\begin{theorem}\label{th:estimate smin feature map}
Let $X$ be a random variable in a Banach space $\mathcal{X}$, such that $X$ is distributed according to $\mu$. Let $\Phi:\mathcal{X}\rightarrow F$ be a feature map. Assume that

\begin{enumerate}
\item $\Phi(X)$ is isotropic, that's it, $\mathbb{E}_\mu (\Phi(X)^\ast \Phi(X)) = I$.
\item $\Phi$ satisfies the growth condition $\lVert \Phi(x)\rVert_F \lesssim 1+\lVert x \rVert_\mathcal{X}^k$ for all $x\in \mathcal{X}$, where $k>0$.
\item $\lVert X \rVert_{\psi_\alpha}<\infty$ for some $\alpha\geq k$.
\end{enumerate}

Let $X_1, \ldots, X_N$ be $N$ independent copies of $X$. Then, if

\begin{align*}
\Gamma_\Phi : F' &\rightarrow \mathbb{R}^N\\
\ell & \mapsto (\langle \ell, \Phi(X_i)\rangle)_{i=1}^N,
\end{align*} given any $t>0$ and $\gamma \leq 1/2$ we have

$$\sqrt{N} - tN^\gamma \leq \smin(\Gamma_\Phi) \leq \sqrt{N} + tN^\gamma$$ with probability higher than $1-\exp (-cp_{D, N, t, \gamma})$.

\end{theorem}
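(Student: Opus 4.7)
The plan is to reduce Theorem \ref{th:estimate smin feature map} to the already-established Theorem \ref{th:estimate smin} by viewing $\Phi(X)$ itself as the relevant random variable taking values in $F$. In other words, I would define $Y := \Phi(X)$, observe that the operator $\Gamma_\Phi$ built from $X_1,\ldots,X_N$ coincides with the operator $\Gamma$ of Section 4.1 constructed from the i.i.d.\ copies $Y_i = \Phi(X_i)$, and then verify that $Y$ meets the two hypotheses of Theorem \ref{th:estimate smin}.

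The first hypothesis, isotropy of $Y$, is assumption (1) of the theorem, so there is nothing to check there. The second hypothesis requires a finite Orlicz norm for some $\alpha' \geq 1$. This is exactly where the growth condition (2) and the tail assumption (3) on $X$ come into play, and this is the step that needs care. Here I would invoke Lemma \ref{lemma: orlicz} with the given $k$ and $\alpha$: since $\alpha \geq k$, the lemma gives $\lVert \Phi(X) \rVert_{\psi_{\alpha/k}} < \infty$, and since $\alpha/k \geq 1$, this $\alpha' := \alpha/k$ is an admissible parameter for Theorem \ref{th:estimate smin}.

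With both hypotheses verified, Theorem \ref{th:estimate smin} applied to $Y$ immediately yields
\[
\sqrt{N} - t N^\gamma \leq \smin(\Gamma_\Phi) \leq \sqrt{N} + t N^\gamma
\]
with probability at least $1 - \exp(-c\, p_{D,N,t,\gamma})$, where $p_{D,N,t,\gamma}$ is now computed with the Orlicz parameter $\alpha/k$, the Orlicz norm $\lVert \Phi(X)\rVert_{\psi_{\alpha/k}}$, and the corresponding $\beta = (1 + 2k/\alpha)^{-1}$ in the definitions of $A_{D,N}$, $B_{D,N}$, and $p_{D,N}$.

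The only genuinely nontrivial step is the Orlicz transfer from $X$ to $\Phi(X)$, and that has already been packaged into Lemma \ref{lemma: orlicz}; the rest is bookkeeping and a direct appeal to Theorem \ref{th:estimate smin}. One subtlety worth flagging in the write-up is that the fourth-moment condition appearing in Theorem \ref{th:independent rows} (hypothesis (1) there) need not be stated separately: the $\psi_{\alpha/k}$-integrability of $\lVert \Phi(X)\rVert_F$ forces all polynomial moments of $|\langle \theta, \Phi(X)\rangle|$ uniformly over unit $\theta \in F'$, so that condition is automatic once the Orlicz norm is finite.
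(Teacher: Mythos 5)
Your proof is correct and follows essentially the same route as the paper: apply Lemma \ref{lemma: orlicz} to deduce $\lVert \Phi(X)\rVert_{\psi_{\alpha/k}}<\infty$ with $\alpha/k\geq 1$, and then invoke Theorem \ref{th:estimate smin} with $Y=\Phi(X)$. Your additional remark that the uniform fourth-moment bound in Theorem \ref{th:independent rows} is automatic from the Orlicz-norm finiteness is a sensible clarification, but it concerns a gap already absorbed into the (terse) proof of Theorem \ref{th:estimate smin} and does not change the argument here.
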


\begin{proof}
By Lemma \ref{lemma: orlicz}, and the assumptions of the theorem, all the assumptions of Theorem \ref{th:estimate smin} are satisfied. Therefore, we may apply Theorem \ref{th:estimate smin} and the result follows.
\end{proof}

\begin{remark}\label{remark:white}
One may think that the assumption that $\Phi(X)$ is isotropic is quite strong. However, this assumption is actually very weak. Indeed, under the mild assumption that the correlation matrix $\Lambda = \mathbb{E}_\mu (\Phi(X)^\ast \Phi(X))$ is invertible, then $\Lambda^{-1/2}\Phi(X)$ is isotropic. We may define now a new feature map using the whitening transformation

\begin{align*}
\widetilde{\Phi}:\mathcal{X} &\rightarrow F\\
x&\mapsto \Lambda^{-1/2}\Phi(x).
\end{align*}

The space of potential labels doesn't change when this transformation is applied (i.e. $\mathcal{F}_{\widetilde{\Phi}} = \mathcal{F}_{\Phi}$), but since $\widetilde{\Phi}(X)$ is now isotropic, we may apply Theorem \ref{th:estimate smin feature map}.

\end{remark}

Therefore, this theorem implies that $\smin(\Gamma_\Phi)\sim \sqrt{N}$. This behaviour can be seen in Figure \ref{fig:smin_over_sqrtn}, where we simulate $\smin(\Gamma_\Phi)/\sqrt{N}$ for different values of $N$, the feature map \linebreak $\Phi:\mathbb{R}^2\rightarrow \mathbb{R}^6$ given by the whitening transformation from Remark \ref{remark:white} applied to the quadratic monomials, and a background noise $\mu$ given by the uniform distribution on $[-1,1]^2$. To compare, we plotted $\smin(\Gamma_\Phi)/ \sqrt{N}$ for clouds of points sampled from a circle with different levels of noise. As we see, the numerical tests back up the theoretical results from Theorem \ref{th:estimate smin feature map}, and there seems to be a gap between the dataset of pure noise, and the datasets with signal.

\subsection{Probability of a false discovery}

We will now address the problem of estimating the probability that a potential label is a label for a cloud of points independently sampled from the background noise $\mu$. We will then use this result to show that the probability of obtaining false discoveries decays exponentially with the size of the cloud of points.

The following theorem offers an insight on what is the probability that a potential label is a false discovery.

\begin{theorem}\label{th: delta}
Let $(\mathcal{X}, F, \Phi, \mu)$ be a $\mu$-augmented observation space. Let $\mathcal{C}\subset \mathcal{X}$ be a set of $N$ points independently sampled according to the background noise $\mu$. Suppose that the assumptions of Theorem \ref{th:estimate smin feature map} hold. Let $f=\ell\circ \Phi\in \mathcal{F}_\Phi$ be a potential label, where $\ell\in F'$ with $\lVert \ell \rVert_{F'}=1$. Set

$$\delta_f = (f_*(\mu))([0, 1-tN^{\gamma-1/2}])\wedge (f_*(\mu))([-1+tN^{\gamma-1/2}, 0]),$$ with $t>0$ and $\gamma\leq 1/2$. Then, with probability higher than $1-\exp(-cp_{D,N,t,\gamma})$, $f$ is not a $(\mu, \delta)$-label for any $0<\delta\leq \delta_f$.
\end{theorem}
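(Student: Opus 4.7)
The plan is to turn the least-singular-value estimate of Theorem \ref{th:estimate smin feature map} into a pointwise lower bound on the maximum of $|f(X_i)|$ across the sample, and then transfer that into a measure bound on any admissible interval $I$.

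First I would apply Theorem \ref{th:estimate smin feature map} directly: under the stated assumptions, with probability at least $1-\exp(-cp_{D,N,t,\gamma})$ we have $s_{\min}(\Gamma_\Phi) \geq \sqrt{N} - tN^\gamma$. Using the variational characterisation of the smallest singular value and the hypothesis $\|\ell\|_{F'}=1$, this gives
$$\|\Gamma_\Phi \ell\|_2^2 = \sum_{i=1}^N \langle \ell, \Phi(X_i)\rangle^2 = \sum_{i=1}^N f(X_i)^2 \geq \bigl(\sqrt{N} - tN^\gamma\bigr)^2 = N\bigl(1 - tN^{\gamma-1/2}\bigr)^2.$$

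Next I would argue by averaging: since the sum of $N$ nonnegative terms is at least $N(1-tN^{\gamma-1/2})^2$, the maximum term is at least $(1-tN^{\gamma-1/2})^2$. Hence there exists an index $i^\star$ with $|f(X_{i^\star})| \geq 1 - tN^{\gamma-1/2}$. Split into two cases according to the sign of $f(X_{i^\star})$. If $f(X_{i^\star}) \geq 1-tN^{\gamma-1/2}$, then any interval $I \subset \mathbb{R}$ with $0 \in I$ and $f(\mathcal{C}) \subset I$ must contain the segment $[0, 1-tN^{\gamma-1/2}]$, and therefore $(f_*\mu)(I) \geq (f_*\mu)([0, 1-tN^{\gamma-1/2}])$. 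Symmetrically, if $f(X_{i^\star}) \leq -(1-tN^{\gamma-1/2})$, then $I \supset [-1+tN^{\gamma-1/2}, 0]$ and $(f_*\mu)(I) \geq (f_*\mu)([-1+tN^{\gamma-1/2}, 0])$. In either case $(f_*\mu)(I) \geq \delta_f$.

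Finally, to conclude, recall that for $f$ to be a $(\mu, \delta)$-label for $\mathcal{C}$ one needs the existence of such an interval $I$ with $(f_*\mu)(I) < \delta$. The previous step shows that on the high-probability event every admissible interval satisfies $(f_*\mu)(I) \geq \delta_f$, so no choice of $I$ works for any $\delta \leq \delta_f$. This yields the claim with the stated probability. The only step that needs care is the case split on the sign of $f(X_{i^\star})$ and the observation that $I$, being connected and containing both $0$ and $f(X_{i^\star})$, must contain the full segment between them; everything else is a direct consequence of the singular-value bound already established.
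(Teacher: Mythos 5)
Your proof is correct and takes essentially the same route as the paper: both pass from the $\ell_2$ lower bound $\|\Gamma_\Phi \ell\|_2 \geq \smin(\Gamma_\Phi)$ to a pointwise bound on $\max_i |f(X_i)|$ (the paper writes this as $\|\Gamma_\Phi\ell\|_\infty \geq \|\Gamma_\Phi\ell\|_2/\sqrt{N}$, which is exactly your averaging step), then use the fact that any admissible interval $I$ is connected and contains $0$ to conclude $I \supset [0,1-tN^{\gamma-1/2}]$ or $I \supset [-1+tN^{\gamma-1/2},0]$, giving $(f_*\mu)(I)\geq\delta_f$. You are, if anything, slightly more explicit than the paper about the sign case split and the connectedness of $I$, which is a virtue rather than a deviation.
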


\begin{proof}
Let $0<\delta\leq \delta_f$. We have

$$\lVert \Gamma_\Phi \ell \rVert_\infty \geq \dfrac{1}{\sqrt{N}} \lVert \Gamma_\Phi \ell \rVert_2 \geq \dfrac{1}{\sqrt{N}}\smin(\Gamma_\Phi).$$

On the other hand, given $I\subset \mathbb{R}$ such that $f(\mathcal{C})\cup \{0\}\subset I$, we have

$$[0, \lVert \Gamma_\Phi \ell \rVert_\infty]\subset I\quad\mbox{ or }\quad [-\lVert \Gamma_\Phi \ell\rVert_\infty, 0]\subset I.$$

Therefore, by Theorem \ref{th:estimate smin feature map},

\begin{align*}
(f_*(\mu))(I) &\geq (f_*(\mu))\left (\left [0, \frac{1}{\sqrt{N}}\smin(\Gamma_\Phi)\right ]\right )\wedge (f_*(\mu))\left (\left [-\frac{1}{\sqrt{N}}\smin(\Gamma_\Phi), 0\right ]\right )\\
&\geq (f_*(\mu))([0, 1-tN^{\gamma-1/2}])\wedge (f_*(\mu))([-1+tN^{\gamma-1/2}, 0]) = \delta_f\geq \delta
\end{align*} with probability higher than $1-\exp(-cp_{D, N, t, \gamma})$, as desired.

\end{proof}

Now, as a consequence of Theorem \ref{th: delta} we have the following result, which ensures that the probability of obtaining a false discovery from a pure-noise dataset decays exponentially fast with the number of points in the dataset.

\begin{lemma}[{{\cite[Lemma 2.3.4]{TAO}}}]\label{lemma: cardinality}
Let $(\mathcal{X}, F, \Phi, \mu)$ be a $\mu$-augmented observation space and let $\Sigma_\epsilon\subset F'$ be a maximal $\epsilon$-net of $F'$, i.e. a subset of $F'$ whose points are separated from each other by a distance of at least $\epsilon$ and is maximal with respect to the inclusion order. Then, $|\Sigma_\epsilon|<(C/\epsilon)^D$, with $C>0$ an absolute constant.
\end{lemma}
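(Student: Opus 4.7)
The plan is to recognise that, although the statement literally writes ``$\epsilon$-net of $F'$'', the bound $(C/\epsilon)^D$ only has content when the separated set lives in a bounded region -- the unit ball of $F'$ (this is how the lemma is stated and used in Tao, and how it will be invoked by a union-bound argument immediately afterwards). With that reading, the argument reduces to a textbook volume-packing estimate in the $D$-dimensional Euclidean space obtained by fixing an orthonormal basis of $F'$.

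First I would convert the separation hypothesis into a disjointness statement by the standard halving trick: if $x, y \in \Sigma_\epsilon$ with $x \neq y$, then $\lVert x - y \rVert_{F'} \geq \epsilon$, so the open balls $B(x, \epsilon/2)$ and $B(y, \epsilon/2)$ are disjoint. Hence the family $\{B(x, \epsilon/2) : x \in \Sigma_\epsilon\}$ is a disjoint collection of balls inside $F'$.

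Next I would bound where these small balls live: because every centre satisfies $\lVert x \rVert_{F'} \leq 1$, each ball $B(x, \epsilon/2)$ is contained in the enlarged ball $B(0, 1 + \epsilon/2)$. Writing $\mathrm{vol}$ for $D$-dimensional Lebesgue measure and $B_1$ for the Euclidean unit ball, disjointness and additivity give
$$|\Sigma_\epsilon| \, (\epsilon/2)^D \, \mathrm{vol}(B_1) \;=\; \sum_{x \in \Sigma_\epsilon} \mathrm{vol}(B(x, \epsilon/2)) \;\leq\; \mathrm{vol}(B(0, 1+\epsilon/2)) \;=\; (1+\epsilon/2)^D \, \mathrm{vol}(B_1).$$
Cancelling $\mathrm{vol}(B_1)$ and rearranging yields $|\Sigma_\epsilon| \leq (1 + 2/\epsilon)^D$, which is bounded by $(C/\epsilon)^D$ for an absolute constant $C$ (take $C = 3$ in the relevant regime $\epsilon \leq 1$, and note that in the regime $\epsilon \geq 1$ any fixed constant works since $|\Sigma_\epsilon|$ is trivially bounded by a covering of the unit ball by $O(1)$ balls of radius $\epsilon$).

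There is no real obstacle: the only point requiring attention is the implicit restriction of $\Sigma_\epsilon$ to the unit ball, without which the claim is vacuous (one could pack infinitely many $\epsilon$-separated points into all of $F'$). Everything else is the classical volumetric packing inequality, relying only on the scaling $\mathrm{vol}(B(0, r)) = r^D \, \mathrm{vol}(B_1)$ in $D$-dimensional Euclidean space and additivity of Lebesgue measure over disjoint unions.
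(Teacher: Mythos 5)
The paper does not reprove this lemma; it simply cites it from Tao's \emph{Topics in Random Matrix Theory} (Lemma 2.3.4 there). Your volume-packing argument is precisely the standard proof of that result and is, in substance, the one Tao gives: halve the separation radius to get disjoint balls, trap them in a slightly enlarged ball, and compare Lebesgue volumes using the $r^D$ scaling in the $D$-dimensional space $F'$. So there is no divergence in approach to report.

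Two remarks on presentation. First, your observation that the statement must implicitly mean a maximal $\epsilon$-separated subset of the \emph{unit ball} of $F'$ (not of all of $F'$) is correct and worth making explicit; as literally written the claim is vacuous, and the way the lemma is used in Theorem~\ref{th: false discovery} -- where the functionals $\ell$ satisfy $\lVert \ell \rVert_{F'}=1$ -- confirms that the unit sphere (or ball) is the intended domain. Second, your handling of the regime $\epsilon \gtrsim 1$ is a little loose: for $\epsilon$ much larger than the diameter of the ball the set $\Sigma_\epsilon$ is a singleton while $(C/\epsilon)^D < 1$, so the bound as stated actually fails there. This is a harmless edge case (the lemma is only invoked for small $\epsilon$, and one normally states it for $\epsilon\in(0,1]$), but it is cleaner either to restrict to $\epsilon\le 1$ or to replace the strict inequality by $\le$ with, say, $C = 3$, which your computation $|\Sigma_\epsilon| \le (1+2/\epsilon)^D$ already delivers on $(0,1]$.
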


\begin{theorem}\label{th: false discovery}
Let $(\mathcal{X}, F, \Phi, \mu)$ be a $\mu$-augmented observation space, and let $\mathcal{C}\subset \mathcal{X}$ be a set of $N$ points independently sampled according to the background noise $\mu$. Assuming the assumptions of Theorem \ref{th:estimate smin feature map} hold, for any $t, \epsilon>0,\gamma\leq 1/2$ there exists $\delta_{\epsilon, t, \gamma,N}>0$ such that the probability that there exists a $(\mu, \delta_{\epsilon, t, \gamma,N})$-label for $\mathcal{C}$ is less than $(C/\epsilon)^D e^{-p_{D, N, t, \gamma}}$, with $C>0$ a constant.

\end{theorem}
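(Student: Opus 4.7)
The plan is to reduce the ``there exists a label'' event to a finite union of events indexed by a net of directions in $F'$, apply Theorem~\ref{th: delta} to each direction, and then use a union bound together with the cardinality estimate from Lemma~\ref{lemma: cardinality}. Since the property of being a $(\mu,\delta)$-label is invariant under nonzero scaling $\ell \mapsto \lambda \ell$, the relevant object is the projective class of $\ell \in F'$, so it suffices to control potential labels coming from the unit sphere of $F'$.

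First I would fix the $\epsilon$-net $\Sigma_\epsilon \subset F'$ of the unit sphere provided by Lemma~\ref{lemma: cardinality}, which has cardinality at most $(C/\epsilon)^D$. For each $\ell \in \Sigma_\epsilon$, Theorem~\ref{th: delta} produces a deterministic threshold $\delta_{\ell \circ \Phi}$ (depending only on $\mu$, $\Phi$, $t$, $\gamma$, $N$) and asserts that the probability that $f = \ell\circ\Phi$ is a $(\mu, \delta)$-label for some $\delta \leq \delta_{\ell\circ\Phi}$ is at most $\exp(-c\,p_{D,N,t,\gamma})$. I would then define
$$\delta_{\epsilon, t, \gamma, N} := \min_{\ell \in \Sigma_\epsilon} \delta_{\ell \circ \Phi},$$
so that a single threshold $\delta_{\epsilon,t,\gamma,N}$ controls all net directions simultaneously.

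Next, by a union bound over the at most $(C/\epsilon)^D$ elements of $\Sigma_\epsilon$, the probability that some $\ell \in \Sigma_\epsilon$ makes $\ell \circ \Phi$ a $(\mu, \delta_{\epsilon,t,\gamma,N})$-label is at most $(C/\epsilon)^D \exp(-c\,p_{D,N,t,\gamma})$. The final step is to lift the bound from the net to all of $F'$: given an arbitrary unit vector $\ell' \in F'$, pick a net element $\ell \in \Sigma_\epsilon$ with $\|\ell-\ell'\|_{F'} < \epsilon$ and show that if $\ell' \circ \Phi$ is a $(\mu, \delta_{\epsilon,t,\gamma,N})$-label then so is $\ell \circ \Phi$ (possibly after slightly enlarging the interval $I$ to absorb the discrepancy). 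This uses the growth condition $\|\Phi(x)\|_F \lesssim 1 + \|x\|_{\mathcal{X}}^k$ from Theorem~\ref{th:estimate smin feature map} to bound $|(\ell'-\ell)\circ \Phi(x)|$ uniformly on $\mathcal{C}$ and on a thin collar around $I$.

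The main obstacle is precisely this last net-to-sphere extension: the perturbation $(\ell-\ell')\circ\Phi$ is not bounded a priori on all of $\mathcal{X}$, so one must either truncate $\mathcal{C}$ to a high-probability ball (absorbing an extra $\exp(-\lambda^\alpha / \|X\|_{\psi_\alpha}^\alpha)$ tail from the Orlicz control) or choose $\epsilon$ as a function of $N$ so that the perturbation is dominated by the slack $tN^{\gamma-1/2}$ already present in $\delta_{\ell\circ\Phi}$. Either way, the point is to choose $\epsilon$ small enough that the net-approximation error does not degrade the threshold $\delta_{\epsilon,t,\gamma,N}$, while the net cardinality $(C/\epsilon)^D$ is still absorbed by the exponentially small factor $\exp(-c\,p_{D,N,t,\gamma})$ from Theorem~\ref{th: delta}.
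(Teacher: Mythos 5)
Your proposal follows the same overall route as the paper's proof: fix a maximal $\epsilon$-net $\Sigma_\epsilon \subset F'$ from Lemma~\ref{lemma: cardinality}, invoke Theorem~\ref{th: delta} on each net direction, union-bound over $\Sigma_\epsilon$, and then transfer from the net to a general unit $\ell' \in F'$ using the growth/Orlicz control on $\Phi$. You also correctly identify the genuine technical obstacle — that the perturbation $(\ell' - \ell)\circ\Phi$ enlarges the labelling interval, and that this must be absorbed somewhere.

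The gap is in your definition $\delta_{\epsilon,t,\gamma,N} := \min_{\ell\in\Sigma_\epsilon}\delta_{\ell\circ\Phi}$. The lifting step does not give ``if $\ell'\circ\Phi$ is a $(\mu, \delta_{\epsilon,t,\gamma,N})$-label then so is $\ell\circ\Phi$''; what it gives is that $\ell\circ\Phi$ is a $(\mu, \delta_{\epsilon,t,\gamma,N} + c_{\epsilon,N})$-label, where $c_{\epsilon,N}>0$ is the additional $\mu$-mass picked up when the interval $I$ is enlarged to the collar $I_{2\epsilon N}$ that contains $f_\epsilon(\mathcal{C})$. With your choice of threshold, $\delta_{\epsilon,t,\gamma,N} + c_{\epsilon,N} > \delta_{\ell_\epsilon\circ\Phi}$ can occur for the minimizing net element, so Theorem~\ref{th: delta} cannot be applied and the chain of inequalities breaks. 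Your fallback of ``choose $\epsilon$ small enough that the error is dominated by the slack $tN^{\gamma-1/2}$'' does not rescue this choice, because the slack is already consumed in producing $\delta_{\ell\circ\Phi}$ itself and there is no margin left between $\delta_{\epsilon,t,\gamma,N}$ and $\min_\ell \delta_{\ell\circ\Phi}$. The paper's fix is to bake the correction directly into the threshold: it defines $\delta_{\epsilon,t,\gamma,N} := \min_{\ell\in\Sigma_\epsilon}\delta_{\ell\circ\Phi} - c_{\epsilon,N}$, so that after enlargement one lands exactly at $\min_\ell\delta_{\ell\circ\Phi} \leq \delta_{\ell_\epsilon\circ\Phi}$ and Theorem~\ref{th: delta} applies. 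With this one modification your argument matches the paper's proof.
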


\begin{proof}
Set $\epsilon>0$, and let $\Sigma_\epsilon$ be an $\epsilon$-net of $F'$. Let $\delta>0$ and suppose that $f=\ell\circ \Phi$ with $\lVert \ell\rVert_{F'}=1$ is a $\delta$-label. There exists some $\ell_\epsilon\in \Sigma_\epsilon$ such that $\lVert \ell-\ell_\epsilon\rVert_{F'}<\epsilon$. Set $f_\epsilon=\ell_\epsilon\circ \Phi$. Now, given an interval $I\subset \mathbb{R}$ with $\{0\}\cup f( \mathcal{C})\subset I$, and recalling that by Lemma \ref{lemma: orlicz} the feature map $\Phi$ is bounded by $N$ with probability higher than $1-2\exp(-C_0N^{\alpha/k})$ we have, with high probability,

$$f_\epsilon(x) = f(x) + ((\ell_\epsilon - \ell)\circ \Phi)(x) \in I_{\epsilon N}\quad \forall x\in \mathcal{C},$$ and thus $\{0\} \cup f_\epsilon(\mathcal{C}) \subset I_{\epsilon N}$, where $I_{\eta} := \{x\in \mathbb{R}:\mbox{dist}(x, I) < \eta\}$. Now,

\begin{align*}
&\mathbb{P}^\mu(f_\epsilon \in I_{\epsilon N}) = \mathbb{P}^\mu (f + (\ell_\epsilon - \ell)\circ \Phi\in I_{\epsilon N}) \leq \mathbb{P}^\mu(f\in I_{2\epsilon N}) \\
&= \mathbb{P}^\mu (f\in I) + \mathbb{P}^\mu (f\in I_{2\epsilon N}\setminus I) < \delta + c_{\epsilon, N}
\end{align*} where $c_{\epsilon, N}$ depends on $\epsilon, N$ and the $\mu$-augmented observation space. Thus, setting $\delta_{\epsilon,t,\gamma,N}:=\min_{\ell \in \Sigma_\epsilon} \delta_{\ell\circ \Phi} - c_{\epsilon,N}$ where $\delta_{\ell \circ \Phi}$ was defined in Theorem \ref{th: delta},

\begin{align*}
&\mathbb{P}^\mu \left (\bigvee_{\substack{\ell\in F'\\ \lVert \ell\rVert_{F'}=1}} \ell\circ \Phi\mbox{ is a }(\mu, \delta_{\epsilon,t,\gamma,N})\mbox{-label for }\mathcal{C}\right ) \leq \mathbb{P}^\mu \left (\bigvee_{\ell\in \Sigma_\epsilon} \ell\circ \Phi\mbox{ is a }(\mu, \delta_{\epsilon,t,\gamma,N} + c_{\epsilon,N})\mbox{-label for }\mathcal{C}\right )\\
&\leq \sum_{\ell\in \Sigma_\epsilon} \mathbb{P}^\mu (\ell\circ \Phi\mbox{ is a }(\mu, \delta_{\epsilon,t,\gamma,N} + c_{\epsilon,N})\mbox{-label for }\mathcal{C}) \leq  \sum_{\ell\in \Sigma_\epsilon} \mathbb{P}^\mu (\ell\circ \Phi\mbox{ is a }(\mu, \delta_{\ell \circ \Phi})\mbox{-label for }\mathcal{C}) \\
&\leq (C/\epsilon)^D e^{-p_{D, N, t, \gamma}}
\end{align*} where Lemma \ref{lemma: cardinality} and Theorem \ref{th: delta} were used in the last step.
\end{proof}

By Remark \ref{remark:pdnt}, $p_{D, N, t, \gamma}\rightarrow \infty$ as $N\rightarrow \infty$, provided that $\gamma\geq 0$. Also, it is easy to check that $p_{D, N, t, \gamma}\rightarrow 0$ as $D\rightarrow \infty$. Hence, as the size of the cloud of points increases the probability of having false discoveries becomes increasingly smaller, as one may have suspected. However, the more complex the feature map is, that's it, the higher the dimension of the feature space is, the higher the probability of obtaining a false discovery is. This also makes sense, since if the dimension of the feature space is high compared to the number of points, it is easier to find a hyperplane that is \textit{close} to the cloud of points.

Figure \ref{fig:delta0} shows $\delta_f$ plotted as a function of $N$. As we see, $\delta_f$ increases rapidly. Moreover, given the exponential nature of $1-\exp(-cp_{D,N,t,\gamma})$, even with a relatively small sample of points, the probability of obtaining a false discovery is very low.

\begin{figure}
\centering
\begin{minipage}[t]{.45\textwidth}
\centering
   \includegraphics[width=\textwidth]{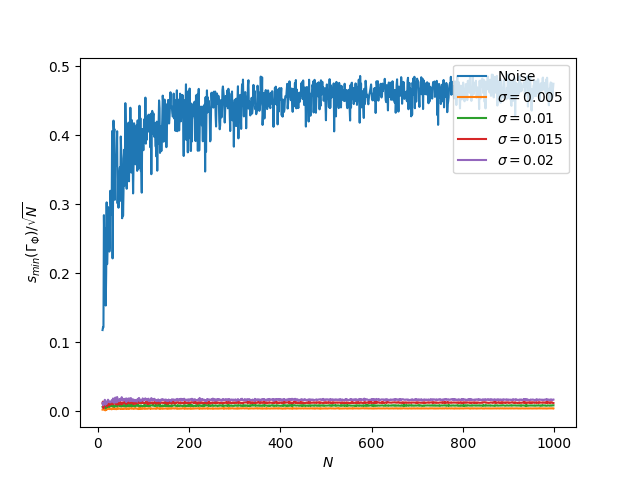}
    \caption{$\smin(\Gamma_\Phi)/\sqrt{N}$ for the feature map $\Phi:\mathbb{R}^2\rightarrow \mathbb{R}^6$ given by $\Phi((x_1,x_2))=(1,x_1,x_2,x_1^2,x_2^2,x_1x_2)$. The blue plot corresponds to a cloud of points sampled from a uniform distribution on $[-1, 1]^2$, and the other plots correspond to a circle with different levels of noise.}
	\label{fig:smin_over_sqrtn}
\end{minipage}\hfill
\begin{minipage}[t]{.45\textwidth}
	\centering
   \includegraphics[width=\textwidth]{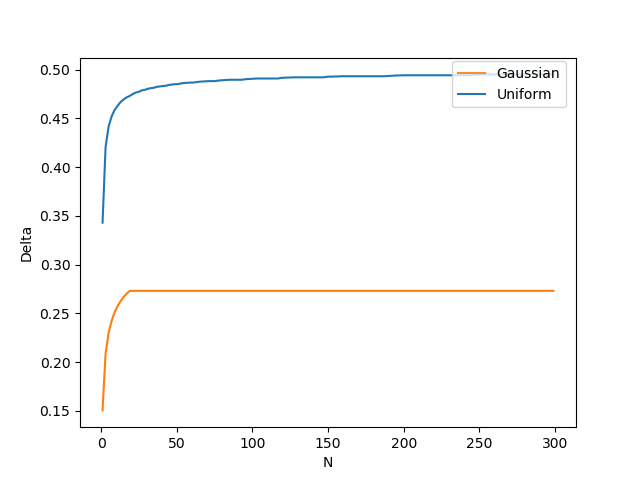}
    \caption{$\delta_f$ as a function of $N$, for $\gamma=0.3$, $t=0.7$ and potential label $f:\mathbb{R}^2\rightarrow \mathbb{R}$ given by $f((x_1,x_2)) = x_1^2 + x_2 ^2 - 0.8^2$. Two background noise measures $\mu$ were used: a uniform distribution on $[-1, 1]^2$, and a $\mathcal{N}(0, I)$ normal distribution. As we see, $\delta_f$ rapidly converges as $N$ increases.}
	\label{fig:delta0}

\end{minipage}

\end{figure}

\section{Numerical experiments}\label{sec:experiments}
\subsection{Overlapping conics}\label{subsec:overlapping}
We set $\mathcal{X}=[-1,1]^2$, $F=\mathbb{R}^6$ and the feature map $\Phi:[-1,1]^2\rightarrow \mathbb{R}^6$ as $\Phi((x_1,x_2)) = (1,x_1,x_2,x_1^2, x_2^2, x_1x_2)$, which allows us to identify conic labels. The background noise measure $\mu$ we picked was the uniform distribution on $\mathcal{X}$.

We tested Algorithm \ref{algo:labelling} with the synthetic dataset shown in Figure \ref{fig:two circles}. The cloud of points $\mathcal{C}\subset \mathcal{X}$ shown in Figure \ref{fig:two circles} was obtained randomly sampling from two circles, and then adding noise with a $\mathcal{N}(0, \sigma^2)$ distribution with $\sigma=0.02$. The cloud of points has 200 points -- 100 points from each circle.

The reason why this dataset was picked is that it is a visually simple cloud of points, but one where clustering would necessarily produce undesired results since it will partition the data into disjoint clusters, which is not very intuitive in this dataset. We also tested the algorithm for the case where we add background noise to the cloud of points in Figure \ref{fig:two circles}.

In the experiment we were interested in finding $(\mu,\delta)$-labels with $\delta=0.05$.

\begin{figure}[h]
  \centering
   \includegraphics[width=\textwidth]{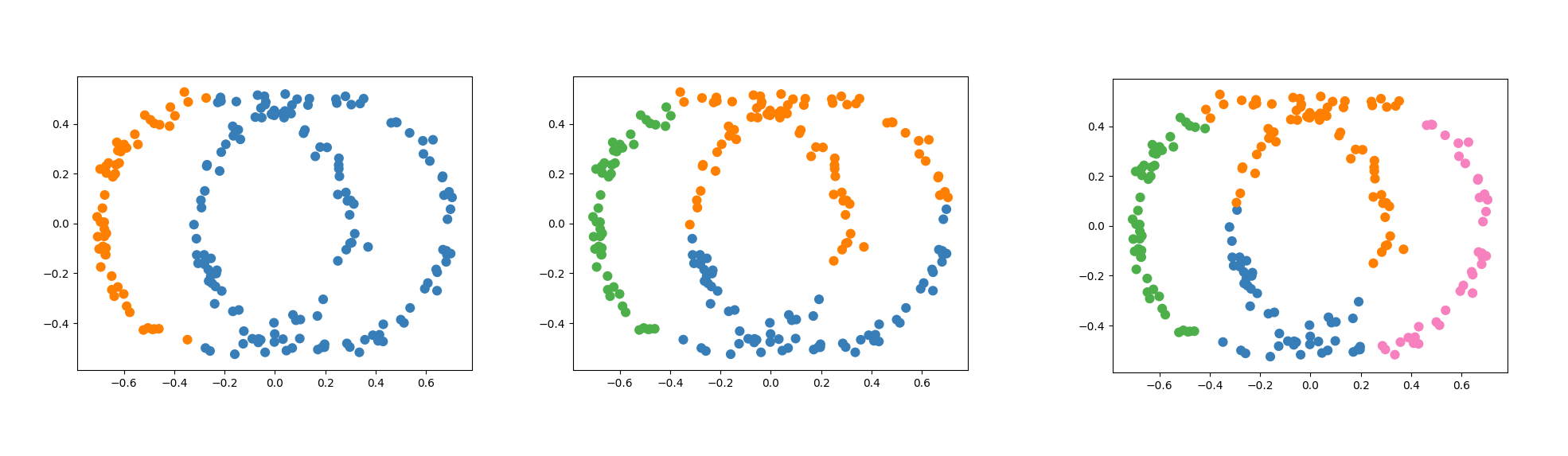}
    \caption{Clusters found by the spectral clustering algorithm proposed in \citep{SPECTRAL}, when trying to find 2, 3 and 4 clusters.}
	\label{fig:spectral}

\end{figure}

\subsubsection{Without adding background noise}

Figure \ref{fig:labelled_overlapping} shows the results obtained with Algorithm \ref{algo:labelling}. As we see, the algorithm is able to successfully identify the two circles. It also captures two other labels: an outer and inner ellipse, which also makes sense.

\begin{figure}[h]
  \centering
  \begin{minipage}[b]{0.49\textwidth} 
    \includegraphics[width=0.8\textwidth]{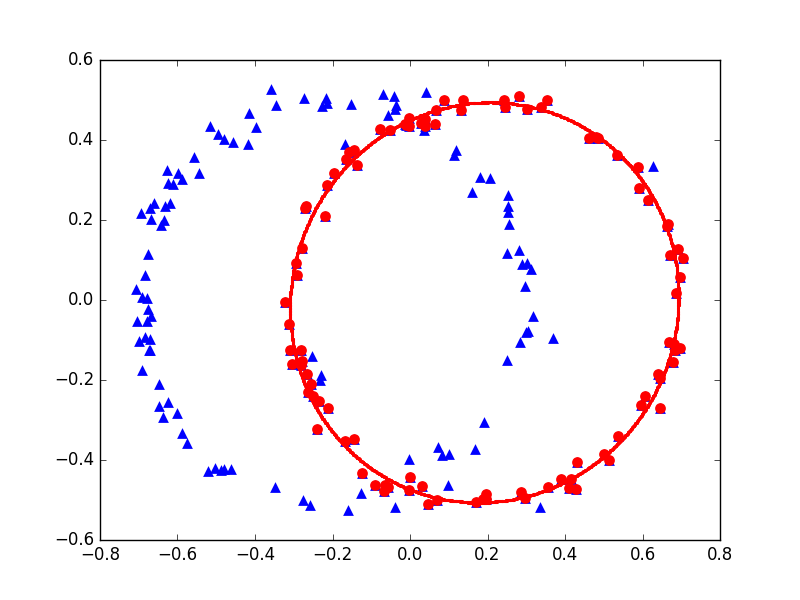}

  \end{minipage}
  \hfill
  \begin{minipage}[b]{0.49\textwidth}
    \includegraphics[width=0.8\textwidth]{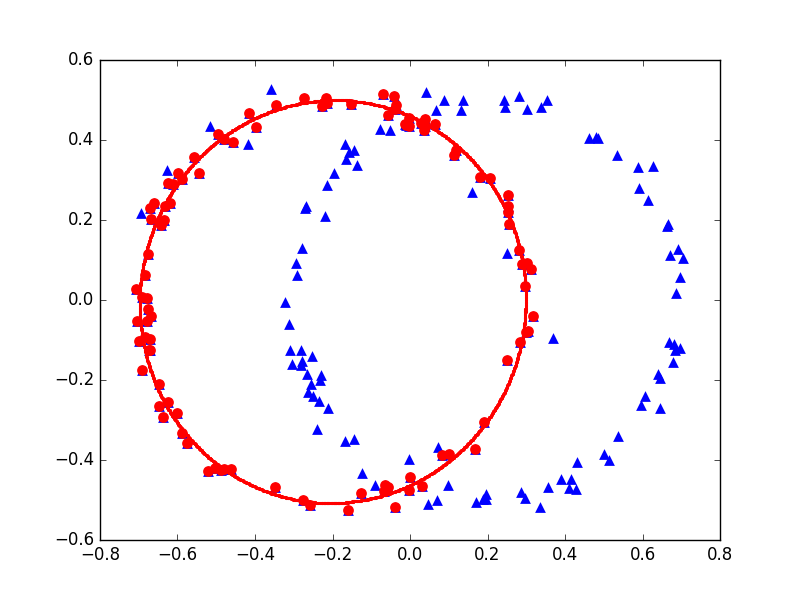}
  \end{minipage}
  \begin{minipage}[b]{0.49\textwidth} 
    \includegraphics[width=0.8\textwidth]{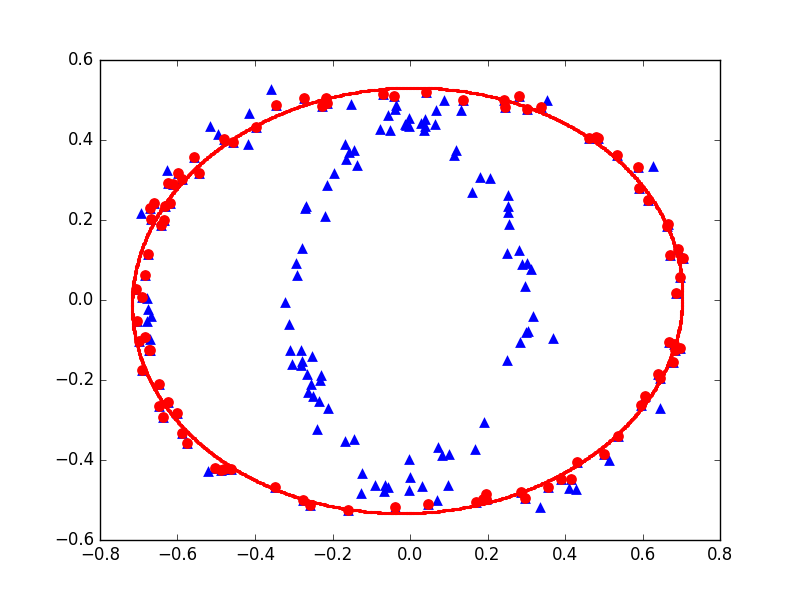}

  \end{minipage}
  \hfill
  \begin{minipage}[b]{0.49\textwidth}
    \includegraphics[width=0.8\textwidth]{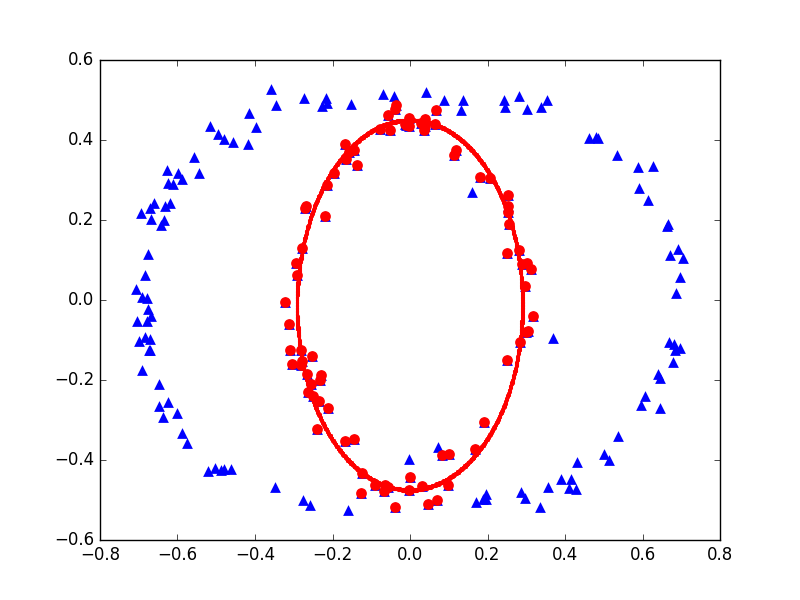}
  \end{minipage}

      \caption{Different $(\mu,\delta)$-labels obtained when applying Algorithm \ref{algo:labelling} to Figure \ref{fig:two circles}, with $\delta=0.05$.}
	\label{fig:labelled_overlapping}

\end{figure}

Notice that there are some points that weren't assigned any label. This depends on the value of $\delta$ -- there exists a trade-off between selecting a small $\delta$, which could potentially miss labelling some points, and selecting a large $\delta$, which could cause finding false labels.

As a comparison, we applied the spectral clustering algorithm proposed in \citep{SPECTRAL} to the dataset. In this experiment, we tried to find 2, 3 and 4 clusters in the cloud of points. As we see in Figure \ref{fig:spectral}, the output seems quite unsatisfactory, since by definition clustering algorithms try to find patterns by partitioning the data, but as discussed before, the patterns found in this dataset cannot be discovered just by partitioning the cloud of points.

\begin{figure}[h]
  \centering
  \begin{minipage}[b]{0.49\textwidth} 
    \includegraphics[width=0.8\textwidth]{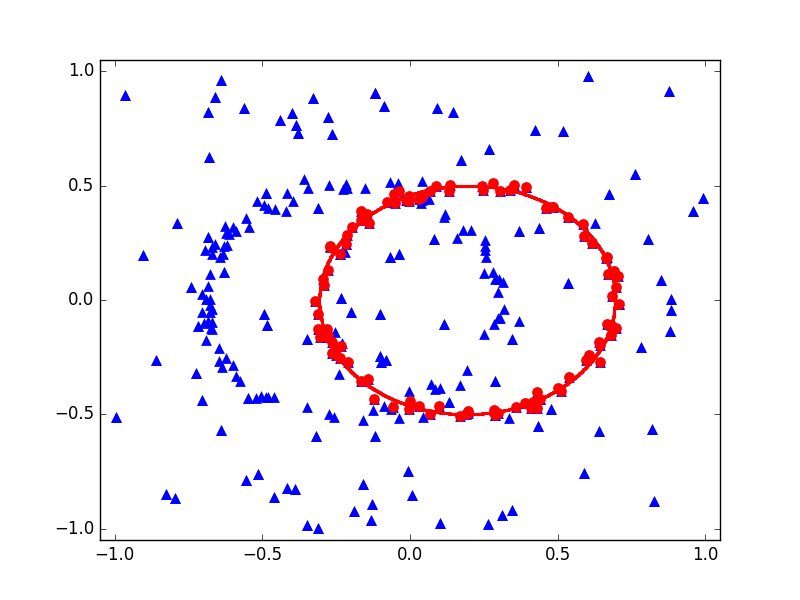}

  \end{minipage}
  \hfill
  \begin{minipage}[b]{0.49\textwidth}
    \includegraphics[width=0.8\textwidth]{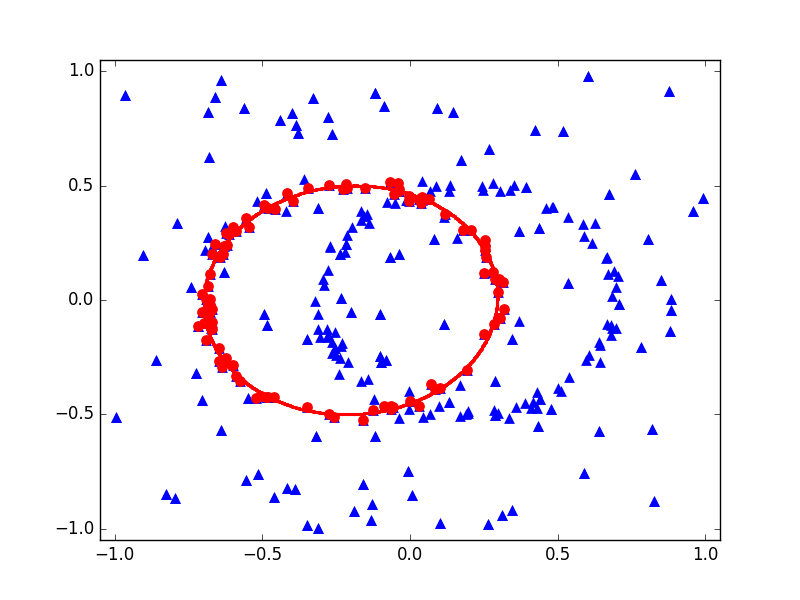}
  \end{minipage}
  \begin{minipage}[b]{0.49\textwidth} 
    \includegraphics[width=0.8\textwidth]{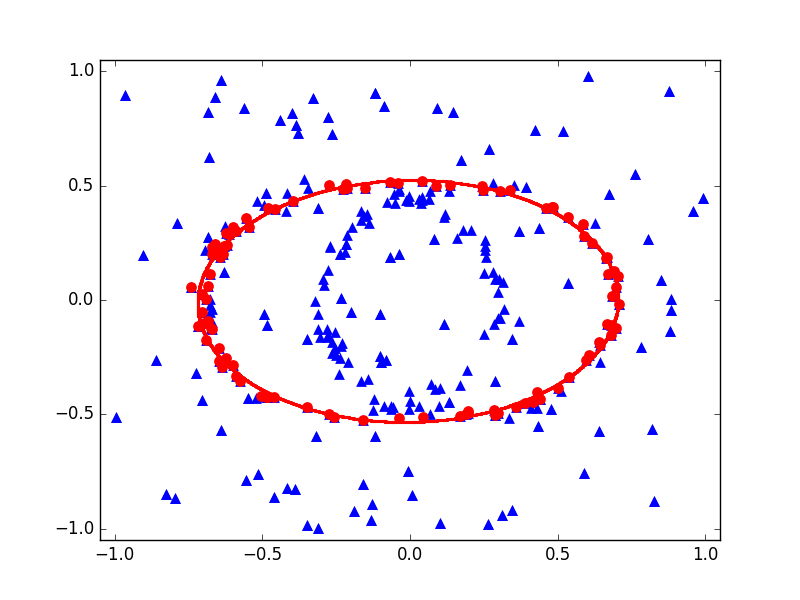}

  \end{minipage}
  \hfill
  \begin{minipage}[b]{0.49\textwidth}
    \includegraphics[width=0.8\textwidth]{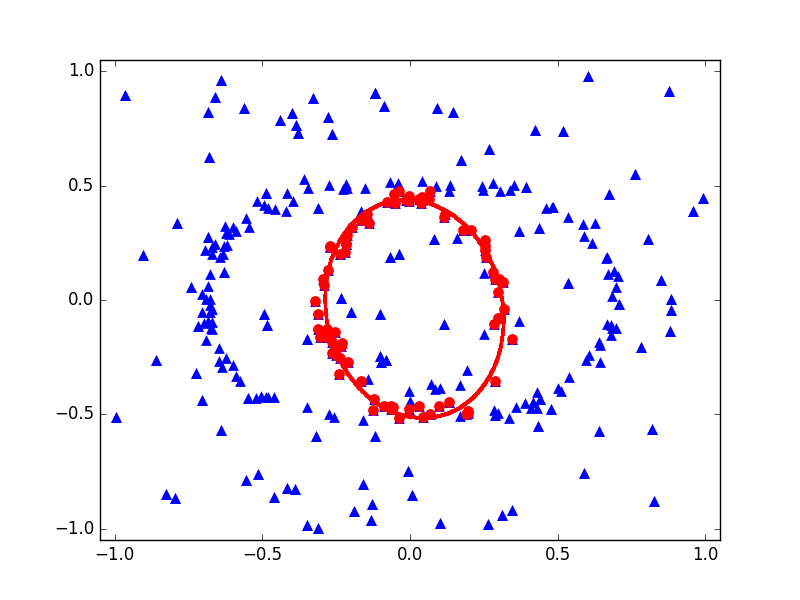}
  \end{minipage}

      \caption{The algorithm performs well even when we add a significant amount of noise. In this new cloud of points $1/3$ of the points are noise.}
	\label{fig:labelled_overlapping_noise}

\end{figure}

\subsubsection{Adding background noise}

As a second experiment, we added background noise to $\mathcal{C}$. More specifically, we added 100 uniformly distributed points to $\mathcal{C}$, so that $1/3$ of the data is now noise. This amount of noise is higher than the one used in previous works on clustering algorithms that deal with noise. For instance, in the celebrated paper \citep{DBSCAN}, where the DBSCAN algorithm is introduced with the aim to to cluster noisy cloud of points, the algorithm is tested with a dataset with 10\% of noise.

As we see in Figure \ref{fig:labelled_overlapping_noise}, the algorithm was robust enough to still identify the same labels as in the noiseless case, without obtaining false discoveries.

\begin{figure}[h]
  \centering
  \begin{minipage}[b]{0.49\textwidth} 
   \includegraphics[width=\textwidth]{figures/two_circles_low_signal}
    \caption{When the signal-to-noise ratio is decreased, the two circles become less apparent making it challenging to the human eye to find the hidden patterns in the data.}
	\label{fig:two circles low signal}
	\end{minipage}
	\hfill
	  \begin{minipage}[b]{0.49\textwidth} 
   \includegraphics[width=\textwidth]{figures/low_signal/arbitrary_conics}
    \caption{This noisy cloud of points contains three conics, but it is not visually obvious where they are located due to the low signal-to-noise ratio.}
	\label{fig:arbitrary conics low signal}
	\end{minipage}
\end{figure}

\begin{figure}[h]
  \centering
  \begin{minipage}[b]{0.49\textwidth} 
    \includegraphics[width=0.8\textwidth]{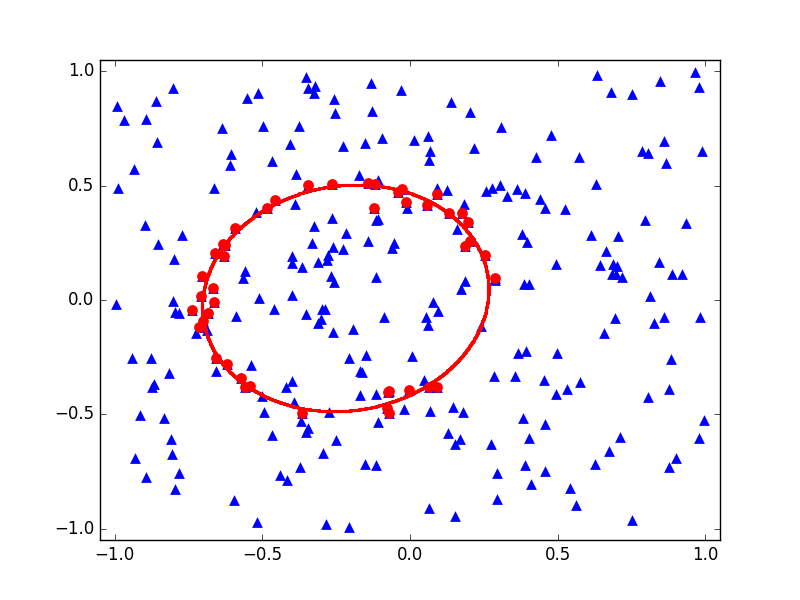}

  \end{minipage}
  \hfill
  \begin{minipage}[b]{0.49\textwidth}
    \includegraphics[width=0.8\textwidth]{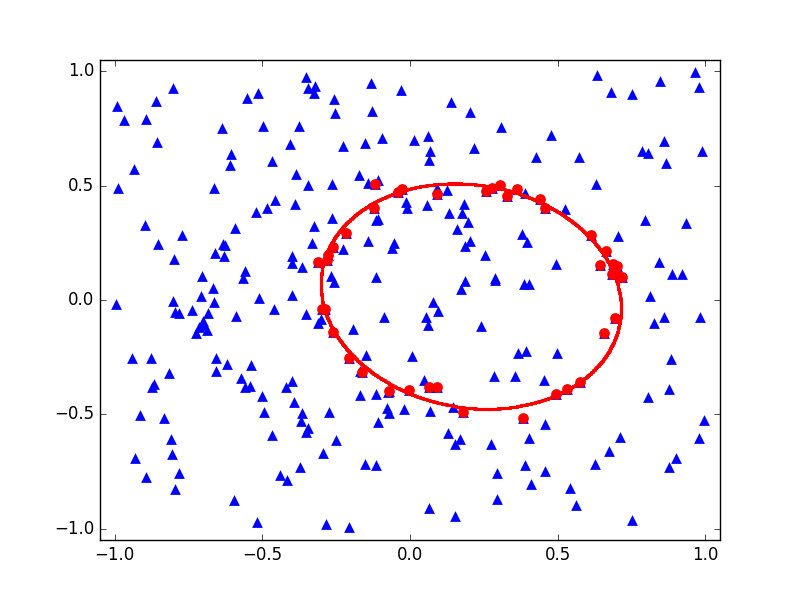}
  \end{minipage}
  \begin{minipage}[b]{0.49\textwidth} 
    \includegraphics[width=0.8\textwidth]{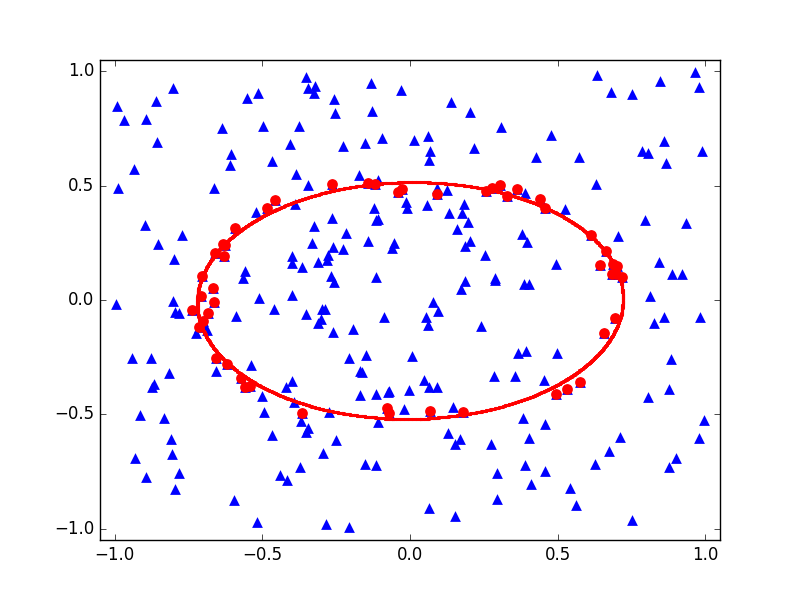}

  \end{minipage}
  \hfill
  \begin{minipage}[b]{0.49\textwidth}
    \includegraphics[width=0.8\textwidth]{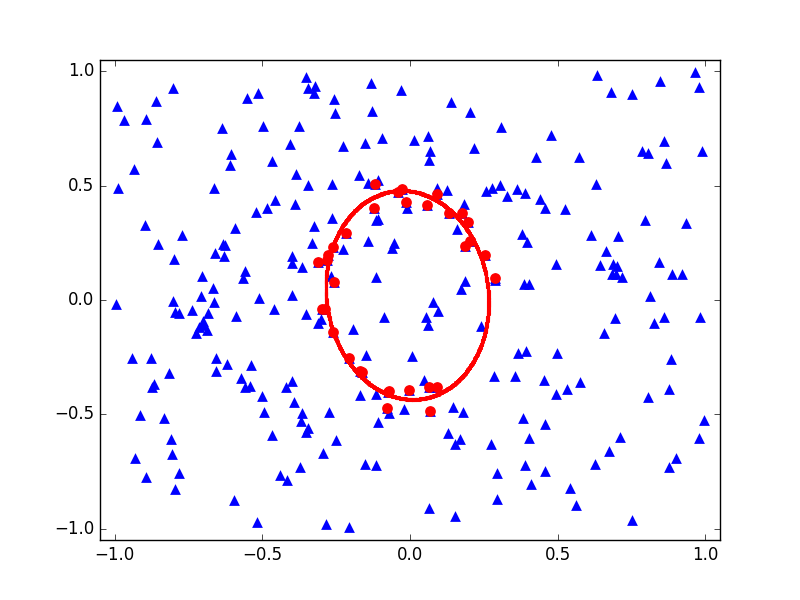}
  \end{minipage}

      \caption{The algorithm still captures the hidden patterns in Figure \ref{fig:two circles low signal} when  the signal-to-noise ratio is significantly decreased.}
	\label{fig:labelled_overlapping_noise_low_signal}

\end{figure}

\begin{figure}[h]
  \centering
   \includegraphics[width=0.5\textwidth]{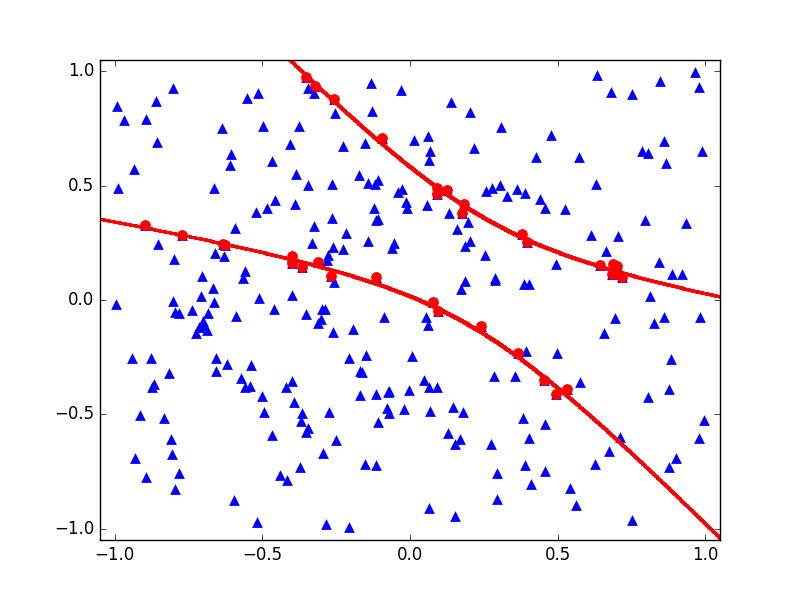}
    \caption{When the amount of noise in the dataset is very high, some false discoveries arise. This false discovery came from Figure \ref{fig:two circles low signal}.}
	\label{fig:two circles false}
\end{figure}

\subsubsection{Small signal-to-noise ratio}	
	
In our third experiment, we significantly reduced the number of points sampled from each circle while keeping the same amount of noise (Figure \ref{fig:two circles low signal}). More specifically, we sampled 40 points from each circle and we then added 200 uniformly distributed points in $[−1, 1]^2$. Therefore, 71.4\% of the dataset is noise, and as a consequence identifying the two circles becomes challenging even for the human eye. However, our algorithm is able to identify the hidden patterns in the data as shown in Figure \ref{fig:labelled_overlapping_noise_low_signal}. However, due to the low signal-to-noise ratio, some false-discoveries also arise, such as the one shown in Figure \ref{fig:two circles false}.

Figure \ref{fig:arbitrary conics low signal} shows three conics hidden in a noisy cloud of points, but the hidden patterns in the data are not visually obvious. Figure \ref{fig:arbitrary conics revealed} shows the three labels hidden in the cloud of points, which correspond to the three conics.

\begin{figure}[h]
  \centering
   \includegraphics[width=0.5\textwidth]{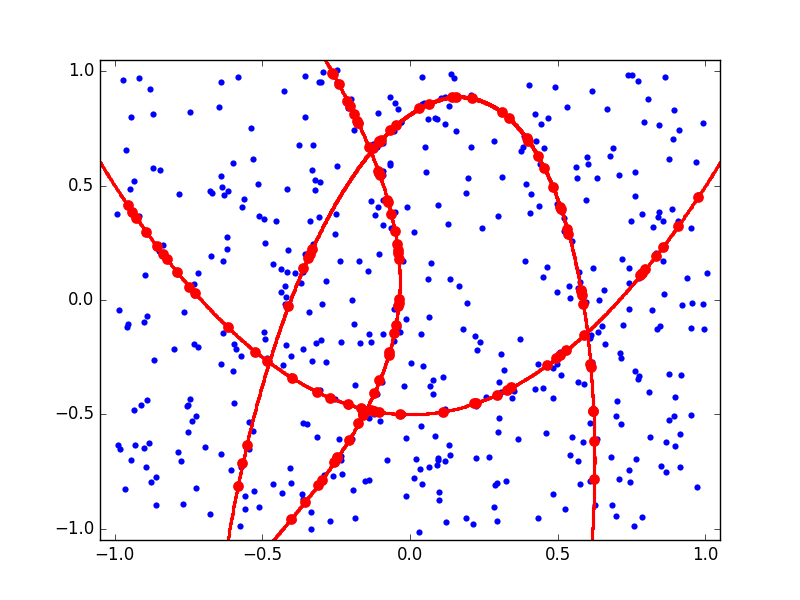}
    \caption{Labels that correspond to the three conics in Figure \ref{fig:arbitrary conics low signal}.}
	\label{fig:arbitrary conics revealed}
\end{figure}

\begin{figure}[h]
  \centering
   \includegraphics[width=0.5\textwidth]{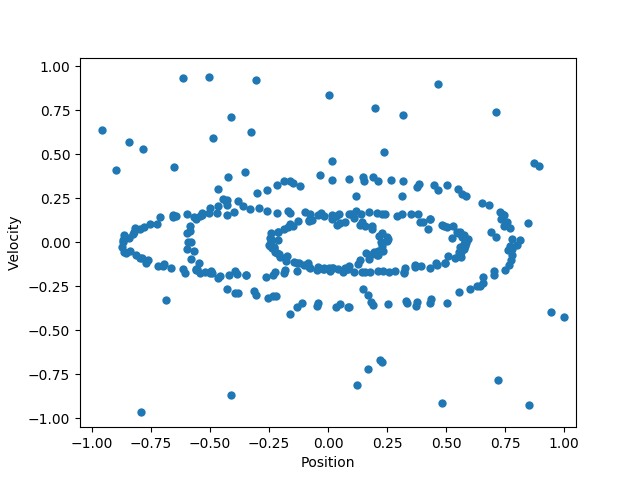}
    \caption{Position and velocity of three simple pendulums with different masses, amplitudes and centres, with background noise.}
	\label{fig:pendulum}
\end{figure}

\begin{table}[h]
\centering
\begin{tabular}{|l|cl|ll|ll|}
\hline
\multicolumn{1}{|c|}{} & \multicolumn{2}{c|}{Centre}                                                                              & \multicolumn{2}{c|}{Amplitude}                                                                           & \multicolumn{2}{c|}{Mass}                                                                             \\
\multicolumn{1}{|c|}{}         & Original                      & \multicolumn{1}{c|}{\begin{tabular}[c]{@{}c@{}}Estimated\\\end{tabular}} & \multicolumn{1}{c}{Original} & \multicolumn{1}{c|}{\begin{tabular}[c]{@{}c@{}}Estimated\end{tabular}} & \multicolumn{1}{c}{Original} & \multicolumn{1}{c|}{\begin{tabular}[c]{@{}c@{}}Estimated\end{tabular}} \\ \hline
Pendulum 1                        &   0.1015                   & 0.1017 & 0.6945                    & 0.6975                                                                           & 3.6181                     & 3.6395 \\
Pendulum 2                         & 0.1703                     & 0.1710                                                                           & 0.4131                    & 0.4134 & 6.1357                    & 6.1563 \\
Pendulum 3                            & -0.3155                     & -0.3129                                                                           & 0.5519                    & 0.5565 & 9.1091                    & 9.4464 \\ \hline
\end{tabular}
\caption{The centre, amplitude and mass of the original three pendulums, and the corresponding quantities estimated from the labels of the points.}
\label{table:pendulum}
\end{table}

\subsection{Recognising simple pendulums from measurements}	\label{subsec:pendulum}

Figure \ref{fig:pendulum} shows a dataset obtained from three (simulated) simple pendulums whose amplitudes, centres and masses are different, with some background noise. Each point in the dataset represents a measurement of the position and velocity of some particles, some of which come from simple pendulums. These measurements could come from an astronomer that is recording the positions of celestial objects for a long period of time -- some of these objects will correspond to planets in the solar system (the pendulums in our case) but there may be some other objects that are recorded such as comets, distant stars, etc (the noise in our case).

If we consider a single point of the dataset, it is impossible to fully characterise the pendulum the measurement came from since two pendulums with different amplitudes, centres and masses could have points that share the same position and velocity. Hence, if one wants to identify all the pendulums hidden in the dataset shown in Figure \ref{fig:pendulum}, one has to consider the dataset as a whole and not just the individual points, since they do not contain any information about the pendulums by themselves.

Therefore, labels arise naturally in this situation: a set of points will share the same label, if they all come from the same pendulum -- in other words, the pendulums themselves are labels.

Using the procedure discussed in this paper, we can label the dataset in Figure \ref{fig:pendulum} and not only identify which points come from the same pendulum, but also fully characterise the three labels identifying the mass, centre and amplitude of the pendulums. As we see in Table \ref{table:pendulum}, one is able to correctly find  the pendulums using the unlabelled dataset, in an unsupervised manner.
	
\section{Conclusion}

In this paper, we develop a framework for the \textit{labelling problem}, an unsupervised learning problem whose objective is to assign one, multiple or no labels to the elements of a given dataset. This should be seen as a dual of the traditional \textit{clustering problem}, where one tries to find a suitable partition of the dataset. In clustering, by definition, elements of the dataset are assigned a single cluster, which is often too restrictive in real applications.

Intuitively, a label is a real-valued function defined on the space where the elements of the dataset are defined. The complexity of the labels will depend on a given feature mapping, which maps the original dataset to a feature space. A cloud of points will be labelled with one of such real-valued functions, if the cloud of points is \textit{unreasonably close} to the kernel of the function, compared to a given probability measure known as the background noise -- a notion that is made precise in Definition \ref{def:label}.

In Section \ref{sec:false discoveries}, we analysed what is the probability of obtaining a false discovery. Analysing the least singular value of some random matrices, we showed in Theorem \ref{th: false discovery} that the probability of obtaining false discoveries decays exponentially with the sample size. We have also proposed an algorithm to find labels in a given dataset (Algorithm \ref{algo:labelling}), which was then tested in Section \ref{sec:experiments} with a synthetic dataset and a dataset of measurements of three simple pendulums. In there, we showed that the algorithm came up with results much more intuitive than what any clustering algorithm can come up with, since clustering algorithms try to partition the data in some suitable way, which is not very natural in many scenarios. As we have shown in Section \ref{sec:experiments}, our algorithm performs well even when we add background noise to the dataset (Figure \ref{fig:labelled_overlapping_noise}) or we reduce the signal-to-noise ratio (Figure \ref{fig:labelled_overlapping_noise_low_signal}).

%%%%%%%%%%%%%%%%%%%%%%%%%%%%%%
%%%%%%%% END OF PAPER %%%%%%%%
%%%%%%%%%%%%%%%%%%%%%%%%%%%%%%

% Acknowledgements should go at the end, before appendices and references

\acks{This work was supported by The Alan Turing Institute under the EPSRC grant EP/N510129/1. The authors wanted to thank Jiajie Zhang for the invaluable suggestions in the development of this paper.}

\vskip 0.2in
\bibliography{references} 

\end{document}